\newtheorem{theorem}{Theorem}[section]
\newtheorem{lemma}[theorem]{Lemma}
\newcommand{\norm}[1]{\left\|{#1}\right\|}
\newcommand{\X}{\mathcal{X}}
\newcommand{\Sc}{\mathcal{S}}
\newcommand{\Y}{\mathcal{Y}}
\newcommand{\br}[1]{\left({#1}\right)}
\newcommand{\ReLU}{\mathrm{ReLU}}
\newcommand{\tran}[1]{{#1}^T}
\newcommand{\R}{\mathbb{R}}
\newcommand{\specF}{F}
\newcommand{\C}{\mathcal{C}}
\DeclareMathOperator*{\argmin}{argmin}
\newtheorem{assumption}{Assumption}
\newcommand{\I}{\mathrm{in}}
\newcommand{\ou}{\mathrm{out}}
\newcommand*\samethanks[1][\value{footnote}]{\footnotemark[#1]}
\title{Verification of Non-linear Specifications for Neural Networks}
\author{{\bf Chongli Qin\thanks{equal contribution}~ , Krishnamurthy (Dj) Dvijotham\samethanks  ~ , 
Brendan O'Donoghue, Rudy Bunel,
Robert Stanforth, Sven Gowal, Jonathan Uesato, Grzegorz Swirszcz, Pushmeet Kohli} \\
DeepMind \\
London, N1C 4AG, UK\\
correspondence: chongliqin@google.com
}
\begin{document}

\maketitle

\begin{abstract}
Prior work on neural network verification has focused on specifications that are \emph{linear} functions of the output of the network, e.g., invariance of the classifier output under adversarial perturbations of the input. In this paper, we extend verification algorithms to be able to certify richer properties of neural networks. To do this we introduce the class of \emph{convex-relaxable specifications}, which constitute nonlinear specifications that can be verified using a convex relaxation. We show that a number of important properties of interest can be modeled within this class, including conservation of energy in a learned dynamics model of a physical system; semantic consistency of a classifier's output labels under adversarial perturbations and bounding errors in a system that predicts the summation of handwritten digits. Our experimental evaluation shows that our method is able to effectively verify these specifications. Moreover, our evaluation exposes the failure modes in models which cannot be verified to satisfy these specifications. Thus, emphasizing the importance of training models not just to fit training data but also to be consistent with specifications. 
\end{abstract}

\section{Introduction}
Deep learning has been shown to be effective for problems in a wide variety of different fields from computer vision to machine translation (see \citet{goodfellow2016deep,sutskever_NIPS2014,krizhevsky2012imagenet}). However, due to the black-box nature of deep neural networks, they are susceptible to undesirable behaviors that are difficult to diagnose. An instance of this was demonstrated by \cite{szegedy2013intriguing}, who showed that neural networks for image classification produced incorrect results on inputs which were modified by small, but carefully chosen, perturbations (known as \emph{adversarial perturbations}). This has motivated a flurry of research activity on designing both stronger attack and defense methods for neural networks~\citep{IanFGSM, carlini2017towards, papernot2016transferability, kurakin2016adversarial, athalye2018obfuscated,moosavi2016deepfool, madry2017towards}. However, it has been observed by \citet{carlini2017towards} and \citet{uesato2018adversarial} that measuring robustness of networks accurately requires careful design of strong attacks, which is a difficult task in general; failure to do so can result in dangerous underestimation of network vulnerabilities.
  
This has driven the need for \emph{formal verification}: a provable guarantee that neural networks are consistent with a \emph{specification} for all possible inputs to the network. Remarkable progress has been made on the verification of neural networks ~\citep{tjeng2017verifying, cheng2017maximum,huang2017safety, ehlers2017formal,katz2017ReLUplex, bunel2017piecewise,Weng2018,gehr2018ai,Wong2018, dvijotham2018dual}. However, verification has remained mainly limited to the class of  \emph{invariance specifications}, which require that the output of the network is invariant to a class of transformations, e.g., norm-bounded perturbations of the input. 

There are many specifications of interest beyond invariance specifications. Predictions of ML models are mostly used not in a standalone manner but as part of a larger pipeline. For example, in an ML based vision system supporting a self-driving car, the predictions of the ML model are fed into the controller that drives the car. In this context, what matters is not the immediate prediction errors of the model, but rather the consequences of incorrect predictions on the final driving behavior produced. More concretely, it may be acceptable if the vision system mistakes a dog for a cat, but not if it mistakes a tree for another car. More generally, specifications that depend on the \emph{semantics of the labels} are required to capture the true constraint that needs to be imposed.

\paragraph{Our Contributions.} Invariance specifications are typically stated as linear constraints on the outputs produced by the network. For example, the property that the difference between the logit for the true label and any incorrect label is larger than zero. In order to support more complex specifications like the ones that capture label semantics, it is necessary to have verification algorithms that support nonlinear functions of the inputs and outputs.  In this paper, we extend neural network verification algorithms to handle a broader class of \emph{convex-relaxable specifications}. We demonstrate that, for this broader class of specifications verification can be done by solving a convex optimization problem. The three example specifications that we study in this paper are: a) Physics specifications: Learned models of physical systems should be consistent with the laws of physics (e.g., conservation of energy), b) Downstream task specifications: A system that uses image classifiers on handwritten digits to predict the sum of these digits should make only small errors on the final sum under adversarial perturbations of the input and c) Semantic specifications: The expected semantic change between the label predicted by a neural network and that of the true label should be small under adversarial perturbations of the input. Finally, we show the effectiveness of our approach in verifying these novel specifications and their application as a diagnostic tool for investigating failure modes of ML models.

\paragraph{Related work.} 
Verification of neural networks has received significant attention in both the formal verification and machine learning communities recently. The approaches developed can be broadly classified into \emph{complete} and \emph{incomplete} verification. Complete verification approaches are guaranteed to either find a proof that the specification is true or find a counterexample proving that the specification is untrue. However, they may require exhaustive search in the worst case, and have not been successfully scaled to networks with more than a few thousand parameters to-date. Furthermore, these exhaustive approaches have only been applied to networks with piecewise linear activation functions. Examples of complete verification approaches include those based on Satisfiability Modulo Theory (SMT) solvers \citep{huang2017safety, ehlers2017formal, katz2017ReLUplex} and those based on mixed-integer programming \citep{bunel2017piecewise, tjeng2017verifying, cheng2017maximum}. Incomplete verification algorithms, on the other hand, may not find a proof even if the specification is true. However, if they do a find a proof, the specification is guaranteed to be true. These algorithms are significantly more scalable thus can be applied to networks with several hundred thousand parameters. Moreover, they have been extended to apply to arbitrary feedforward neural networks and most commonly used activation functions (sigmoid, tanh, relu, maxpool, resnets, convolutions, etc.). Incomplete verification algorithms include those based on propagating bounds on activation functions \citep{Weng2018, Mirman2018} and convex optimization \citep{dvijotham2018dual}. By making verification both scalable to larger models and more efficient -- these approaches have enabled verification to be folded into training so that networks can be trained to be provably consistent with specifications like adversarial robustness \citep{pmlr-v80-mirman18b, Wong2018b, raghunathan2018certified}.

\paragraph{Organization of the Paper} In Section 2 we give examples of different non-linear specifications which neural networks should satisfy, in Section 3 we formalize our approach and use it to show how the non-linear specifications in Section 2 can be verified. Finally, in Section 4 we demonstrate experimental results.

\section{Specifications beyond robustness} \label{sec:Specs}
In this section, we present several examples of verification of neural networks beyond the usual case of robustness under adversarial perturbations of the input -- all of which are non-linear specifications.

There has been research done on relaxations of non-linear activation functions including $\ReLU$ \citep{Wong2018, Dvijotham2018b, Weng2018,Mirman2018}, sigmoid, tanh and other arbitrary activation functions \citep{dvijotham2018dual, huang2017safety}. However, there is a distinction between relaxing non-linear activations vs non-linear specifications. The former assumes that the relaxations are done on each neuron independently, whereas in non-linear specifications the relaxations will involve interactions across neurons within the neural network such as the softmax function.

Prior verification algorithms that focus on adversarial robustness are unable to handle such nonlinearities, motivating the need for the new algorithms developed in this paper.

\paragraph{Specifications derived from label semantics:} Supervised learning problems often have semantic structure in the labels. For example, the CIFAR-100 dataset, a standard benchmark for evaluating image classification, has a hierarchical organization of labels, e.g., the categories `hamster' and `mouse' belong to the super-category `small mammals'. In many real world applications, misclassification within the same super-category is more acceptable than across super-categories.
We formalize this idea by using a distance function on labels $d(i, j)$ -- e.g., defined via the above mentioned hierarchy -- and require the probability of a classifier making a `large distance' mistake to be small. Formally, if we consider an input example $x$ with true label $i$, the expected semantic distance can be defined as $\mathbb{E}\left\lbrack d(i, j) \right\rbrack = \sum_j d(i, j) P(j|x)$, where $P(j | x)$ is the probability that the classifier assigns label $j$ to input $x$, and we assume that $d(i,i) = 0$.
We can write down the specification that the expected distance is not greater than a threshold $\epsilon$ as
\begin{equation}
\mathbb{E}\left\lbrack d(i, j) \right\rbrack \leq \epsilon. \label{eq:semantic_distance}
\end{equation}
Here, we consider neural network classifiers where $P(j|x)$ is the softmax over the logits produced by the network. We note that this is a non-separable nonlinear function of the logits. 
\paragraph{Specifications derived from physics:} An interesting example of verification appears when employing ML models to simulate physical systems (see \citet{stewart2017label, ehrhardt2017learning, bar2018data}). In this context, it may be important to verify that the learned dynamic models are consistent with the laws of physics. For example, consider a simple pendulum with damping, which is a dissipative physical system that loses energy. The state of the system is given by $\br{w, h, \omega}$ where $w$ and $h$ represent the horizontal and vertical coordinates of the pendulum respectively, and $\omega$ refers to the angular velocity. For a pendulum of length $l$ and mass $m$ the energy of the system is given by
\begin{equation}
E\br{w, h, \omega} = \underbrace{mgh}_{\text{potential}} + \underbrace{\frac{1}{2}ml^2 \omega^2}_{\text{kinetic}}, \label{eq:energy_pendulum}
\end{equation}
here  $g$ is the gravitational acceleration constant. Suppose we train a neural network, denoted $f$, to model the next-state of the system given the current state: $\br{w^\prime, h^\prime, \omega^\prime} = f\br{w, h, \omega}$, where $\br{x^\prime, h^\prime, \omega^\prime}$ represents the state at the next time step (assuming discretized time). For this dissipative physical system the inequality
\begin{align}
E\br{w^\prime, h^\prime, \omega^\prime} \leq E\br{w, h, \omega}    \label{eq:energy_spec}
\end{align}
should be true for all input states $\{w, h, \omega\}$. We note, this requires our verification algorithm to handle quadratic specifications.

\paragraph{Specifications derived from downstream tasks:}
Machine learning models are commonly used as a component in a pipeline where its prediction is needed for a downstream task. Consider an accounting system that uses a computer vision model to read handwritten transactions and subsequently adds up the predicted amounts to obtain the total money inflow/outflow. In this system, the ultimate quantity of interest is the summation of digits - even if the intermediate predictions are incorrect, the system cares about the errors accumulated in the final sum. Formally, given a set of handwritten transactions $\{x_n\}_{n=1}^N$ and corresponding true transaction values $\{i_n\}_{n=1}^N$, the expected error is 
\begin{align}
 \mathbb{E}_{j}\left\lbrack \big|\sum_{n=1}^N (j_n -i_n)\big| \right\rbrack= \sum_{j \in J^N} \big| \sum_{n=1}^N (j_n -i_n)\big| \prod_{n=1}^N P\br{j_n | x_n} \label{eq:digit_sum_prop},
\end{align}
where $J$ is the set of possible output labels. We would like to verify that the expected error accumulated for the inflow should be less than some $\epsilon > 0$
\begin{align}
\mathbb{E}_{j}\left\lbrack \big|\sum_{n=1}^N (j_n -i_n) \big| \right\rbrack \leq \epsilon \label{eq:digit_sum}.
\end{align}
In contrast to the standard invariance specification, this specification cares about the error in the sum of predictions from using the ML model $N$ times. 
\section{Formulation of nonlinear specifications}\label{sec:convex_relaxable}
We now formalize the problem of verifying nonlinear specifications of neural networks. Consider a network mapping $f: \X \rightarrow \Y$, with specification $F: \X \times \Y \rightarrow \R$ which is dependent on both the network inputs $x$ and outputs $y=f(x)$. Given $\Sc_{\I} \subseteq \X$, the set of inputs of interest, we say that the specification $F$ is satisfied if
\begin{align}
    F(x, y) \leq 0 \quad \forall x \in \Sc_{\I},~ y=f(x). \label{eq:model_verification_problem}
\end{align}
For example, the case of perturbations of input $x^\mathrm{nom}$ in an $l_\infty$-norm ball uses
\begin{equation}
\Sc_{\I}= \lbrace x : \parallel x- x^\mathrm{nom} \parallel_{\infty}\leq \delta \rbrace,
\label{eq:Sdelta}
\end{equation}
where $\delta$ is the perturbation radius. For this kind of specification, one way to solve the verification problem is by \emph{falsification}, i.e., searching for an $x$ that violates \eqref{eq:model_verification_problem} using gradient based methods (assuming differentiability). More concretely, we can attempt to solve the optimization problem \[
\max_{x \in \Sc_{\I}} F\br{x, f(x)}
\]
using, for example, a projected gradient method of the form
\begin{align}
x^{(i+1)} \gets \mathrm{Proj}\br{x^{(i)}+ \eta_i \frac{d F\br{x, f(x)}}{d x}, \Sc_{\I}} ,
\label{eq:pgd}
\end{align}
where $\eta_i$ is the learning rate at time-step $i$ and $\mathrm{Proj}$ denotes the Euclidean projection 
\[\mathrm{Proj}\br{x, \Sc_{\I}} = \argmin_{\zeta \in \Sc_{\I}} \norm{x-\zeta}.\]
If this iteration finds a point $x \in \Sc_{\I}$ such that $F\br{x, f(x)} > 0$, then the specification is not satisfied by the network $f$. 
This is a well-known technique for finding adversarial examples \citep{carlini2017adversarial}, but we extend the use here to nonlinear specifications. Maximizing the objective Eq. \eqref{eq:model_verification_problem} for an arbitrary non-linear specification is hard. Since the objective is non-convex, gradient based methods may fail to find the global optimum of $F\br{x, f(x)}$ over $x \in \Sc_{\I}$. Consequently, the specification can be perceived as satisfied even if there exists $x \in \Sc_{\I}$ that can falsify the specification, the repercussions of which might be severe. This calls for algorithms that can find \emph{provable} guarantees that there is no $x \in \Sc_{\I}$ such that $F\br{x, f(x)} > 0$ - we refer to such algorithms as \emph{verification algorithms}. We note that, even when $F$ is linear, solving the verification problem for an arbitrary neural network $f$ has been shown to be NP-hard \citep{Weng2018} and hence algorithms that exactly decide whether \eqref{eq:model_verification_problem} holds are likely to be computationally intractable.

Instead, we settle for \emph{incomplete verification}: i.e., \emph{scalable} verification algorithms which can guarantee that the specification is truly satisfied. Incomplete algorithms may fail to find a proof that the specification is satisfied even if the specification is indeed satisfied; however, in return we gain computational tractability. In order to have tractable verification algorithms for nonlinear specifications, we define the class of \emph{convex-relaxable specifications} - which we outline in detail in Section \ref{sec:Convex}. We show that this class of specifications contains complex, nonlinear, verification problems and can be solved via a convex optimization problem of size proportional to the size of the underlying neural network. In later sections, we show how the specifications derived in the prior sections can be modeled as instances of this class.
\subsection{Neural network}\label{sec:NN}

Here let $x \in \R^n, y \in \R^m$ denote either a single input output pair, i.e., $y=f(x)$ or multiple input output pairs (for multiple inputs we apply the mapping to each input separately). For classification problems, the outputs of the network are assumed to be the logits produced by the model (prior to the application of a softmax). For clarity of presentation, we will assume that the network is a feedforward network formed by $K$ layers of the form
\[x_{k+1}=g_k\br{W_kx_k + b_k}, k=0, \ldots, K-1\]
where $x_0=x$, $x_K=y$, and $g_k$ is the activation function applied onto the $k$th layer. While our framework can handle wider classes of feedforward networks, such as ResNets (see \citet{Wong2018b}), we assume the structure above as this can capture commonly used convolutional and fully connected layers. Assume that the inputs to the network lie in a bounded range $\Sc_{\I} =\{x: l_0 \leq x \leq u_0\}$. This is true in most applications of neural networks, especially after rescaling. We further assume that given that the input to the network is bounded by $[l_0, u_0]$ we can bound the intermediate layers of the network, where the $k$th hidden layer is bounded by $[l_k, u_k]$. Thus, we can bound the outputs of our network
\begin{equation}
    \Sc_{\ou} = \left\lbrace y : l_K \leq  y \leq u_K \right\rbrace. \label{eq:y_relax}
\end{equation}
Several techniques have been proposed to find the tightest bounds for the intermediate layers such as \citet{Wong2018, bunel2017piecewise, ehlers2017formal, dvijotham2018dual}. We refer to these techniques as bound propagation techniques.

\subsection{Convex-relaxable specifications}\label{sec:Convex}
Most prior work on neural network verification has assumed specifications of the form:
\begin{align}
F(x, y) = c^T y + d \leq 0 \quad \forall x \in \Sc_{\I}, y = f(x)
\end{align}
where $c, d$ are fixed parameters that define the specification and $\Sc_{\I}$ is some set. Here, we introduce the class of \emph{convex-relaxable} specifications. 

\begin{assumption}\label{def:ConvexRelaxSpec}
We assume that $\Sc_{\I} \subseteq \R^n, \Sc_{\ou} \subseteq \R^m$ are compact sets and that we have access to an efficiently computable\footnote{In all the examples in this paper, we can construct this function in time at most quadratic in $n, m$.} procedure that takes in $\specF, \Sc_{\I},\Sc_{\ou}$ and produces a compact convex set $\C\br{\specF, \Sc_{\I}, \Sc_{\ou}}$ such that
\begin{equation}
\mathcal{T}\br{\specF, \Sc_{\I}, \Sc_{\ou}} := \{\br{x, y, z}: \specF\br{x, y} = z, x \in \Sc_{\I}, y \in \Sc_{\ou} \} \subseteq \C \br{\specF, \Sc_{\I}, \Sc_{\ou}}.
\label{eq:convex_relax}
\end{equation}
\end{assumption}
When the above assumption holds we shall say that the specification
\begin{align}
F\br{x, y} \leq 0 \quad \forall x \in \Sc_{\I}, y=f(x) \label{eq:ConvexRelaxSpec}    
\end{align}
is convex-relaxable.
Assumption \ref{def:ConvexRelaxSpec} means we can find an efficiently computable convex relaxation of the specification which allows us to formulate the verification as a convex optimization problem.

\subsection{Convex optimization for verification}
We can propagate the bounds of our network layer by layer given that the inputs are bounded by $[l_0, u_0]$ thus we can construct the following optimization problem
\begin{align}
\text{maximize } & z \label{eq:ConvRelax}\\
\text{subject to }& \br{x_0, x_K, z} \in \C\br{\specF, \Sc_{\I}, \Sc_{\ou}} \nonumber\\
 &x_{k+1} \in \mathrm{Relax}\br{g_k}\br{W_k x_k + b_k, l_k, u_k}, k = 0, \cdots, K-1 \nonumber \\
 & l_k \leq x_k \leq u_k, \quad k = 0, \cdots, K\nonumber
\end{align}
over the variables $x, x_1, \cdots, x_k$. Here, $\mathrm{Relax}(g_k)$ represents the relaxed form of the activation function $g_k$. For example, in \citet{bunel2017piecewise, ehlers2017formal} the following convex relaxation of the $\ReLU(x)=\max(x, 0)$ function for input $x \in [l, u]$ was proposed:
\begin{align}
\mathrm{Relax}\br{\ReLU}\br{x, l, u} = \begin{cases}
x & \quad \text{ if } l \geq 0 \\
0 & \quad \text{ if } u \leq 0 \\
\left\{\alpha: \alpha \geq x, \alpha \geq 0, \alpha \leq \frac{u}{u-l}\br{x - l}\right\} & \text{ otherwise}.
\end{cases}
\end{align}
For simplicity, we use $g_k=\ReLU$ and the above relaxation throughout, for other activation functions see \citet{dvijotham2018dual}.
\begin{lemma}
Given a neural network $f$, input set $\Sc_{\I}$, and a convex-relaxable specification $\specF$, we have that $F(x, y) \leq 0$ for all $x \in \Sc_{\I}$, $y=f(x)$ if the optimal value of \eqref{eq:ConvRelax} is smaller than $0$ and further \eqref{eq:ConvRelax} it is a convex optimization problem.
\end{lemma}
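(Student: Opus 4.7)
The plan is to prove the two assertions separately: first that the problem \eqref{eq:ConvRelax} is convex, and second that feasibility of the true network trajectory implies soundness (optimal value $\leq 0$ forces $F(x, f(x)) \leq 0$).

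For convexity, I would observe that the objective $z$ is linear in the decision variables $x_0,\ldots,x_K, z$. Each constraint defines a convex set: the containment $(x_0, x_K, z) \in \mathcal{C}(F, \mathcal{S}_{\mathrm{in}}, \mathcal{S}_{\mathrm{out}})$ is convex by Assumption~\ref{def:ConvexRelaxSpec}; the box constraints $l_k \leq x_k \leq u_k$ are obviously convex; and for each layer the set $\mathrm{Relax}(g_k)(W_k x_k + b_k, l_k, u_k)$ is convex in $(x_k, x_{k+1})$ by construction (this is the defining property of the relaxations discussed above; the explicit ReLU case is a polyhedron, and analogous polyhedral/convex relaxations for sigmoid, tanh, etc.\ are documented in \citet{dvijotham2018dual}). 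The feasible set is therefore an intersection of convex sets, so \eqref{eq:ConvRelax} is a convex program with a linear objective.

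For soundness, I would pick an arbitrary $x \in \mathcal{S}_{\mathrm{in}}$ and set $y = f(x)$, then exhibit a feasible point whose objective equals $F(x, y)$. Concretely, define $x_0 := x$ and recursively $x_{k+1} := g_k(W_k x_k + b_k)$, so that $x_K = y$, and set $z := F(x, y)$. Since $\mathrm{Relax}(g_k)$ contains the graph of $g_k$ whenever the pre-activation lies in $[l_k, u_k]$, the layerwise constraints are satisfied. By the definition of the interval bounds $[l_k, u_k]$ obtained from bound propagation (Section~\ref{sec:NN}), we automatically have $l_k \leq x_k \leq u_k$ for each $k$, and in particular $y = x_K \in \mathcal{S}_{\mathrm{out}}$. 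Finally, because $(x, y, F(x,y)) \in \mathcal{T}(F, \mathcal{S}_{\mathrm{in}}, \mathcal{S}_{\mathrm{out}}) \subseteq \mathcal{C}(F, \mathcal{S}_{\mathrm{in}}, \mathcal{S}_{\mathrm{out}})$ by \eqref{eq:convex_relax}, the triple $(x_0, x_K, z)$ lies in the convex relaxation, confirming feasibility. Thus $F(x, y)$ is a feasible value of $z$ in \eqref{eq:ConvRelax}, so $F(x, y) \leq \max z \leq 0$ by hypothesis.

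Neither step is really an obstacle: convexity is immediate from the assumptions, and soundness is a one-line verification that the physical trajectory is feasible in the relaxation. The only subtlety worth flagging is that one must ensure the bound-propagation intervals $[l_k, u_k]$ were computed so as to actually enclose the reachable set from $\mathcal{S}_{\mathrm{in}}$; this is exactly the guarantee that the cited bound propagation techniques provide, so I would invoke Section~\ref{sec:NN} for it rather than re-derive it. No separate argument is needed for $\mathcal{S}_{\mathrm{out}}$ since it is defined in \eqref{eq:y_relax} precisely as the set $\{y : l_K \leq y \leq u_K\}$, into which $f(x)$ automatically falls.
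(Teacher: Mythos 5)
Your proof is correct and is essentially the paper's argument: the paper proves the soundness claim by contradiction (a point of $\mathcal{T}$ violating the specification would, by the containment $\mathcal{T} \subseteq \mathcal{C}$, be feasible for \eqref{eq:ConvRelax} with objective value above zero), which is just the contrapositive of your direct feasibility argument, and convexity is handled identically via the linear objective and convex constraint set. If anything you are slightly more careful than the paper, which only invokes membership in $\mathcal{C}$ and leaves implicit the check that the true network trajectory also satisfies the layerwise relaxation and box constraints.
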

\begin{proof}
This follows from Assumption \ref{def:ConvexRelaxSpec}, for details we refer to Appendix \ref{A:lemma31}.
\end{proof}

\subsection{Modeling specifications in a convex-relaxable manner}
\paragraph{Specifications involving the Softmax.} The specifications on label semantics from \eqref{eq:semantic_distance} can be modeled as linear functions of the softmax layer of the neural network: $P(j|x) = e^{y_j} / \sum_k e^{y_k}$, $y = f(x)$. The semantic distance constraint \eqref{eq:semantic_distance} can then be written as
\begin{equation}
\sum_j P(j|x) d_j \leq \epsilon,\label{eq:semantic_specification}
\end{equation}
where we use $d_j=d(i, j)$ for brevity, and $d_i = 0$. Given initial constraints on the input $x$ (of the form $x \in \Sc_{\I}=\{x: l_0 \leq x \leq u_0\}$), we use bound propagation techniques to obtain bounds on $\Sc_{\ou}=\lbrace y: l_K \leq y \leq u_K\rbrace$  (as described in Section \citep{bunel2017piecewise, dvijotham2018dual}). Thus the specification can be rephrased as
\begin{equation}
F(x, y) = \sum_j \exp\br{y_j} \br{d_j - \epsilon} \leq 0 \quad \forall x \in \Sc_{\I}, y \in \Sc_{\ou}.
\label{eq:TrueSoftmax}
\end{equation}
The set $\C\br{\specF, \Sc_{\I}, \Sc_{\ou}}$ can be constructed by noting that the exponential function is bounded above by the linear interpolation between $(l_K, \exp(l_K))$ and $(u_K, \exp(u_K))$. This is given by
\[
G(y_i, l_K, u_K) =  \frac{\exp(u_{K,i}) - \exp(l_{K,i}))}{u_{K,i}-l_{K,i}} y_i + \frac{u_{K,i} \exp(l_{K,i}) - l_{K,i} \exp(u_{K,i})}{u_{K,i}-l_{K,i}} .
\]
We propose the following convex relaxation of the $\exp$ function:
\begin{equation}
    \mathrm{Relax}(\exp)(y, l_K, u_K) = \left\lbrace \alpha : \exp(y) \leq \alpha \leq G(y, l_K, u_K) \right\rbrace.
\end{equation}
Given the above relaxation it can be shown (refer to Appendix~\ref{A:lemma32}), the set below satisfies \eqref{eq:convex_relax}:
\begin{align}
 \C_{\mathrm{smax}}\br{\specF, \Sc_{\I}, \Sc_{\ou}}= \left\{ 
 \begin{array}{l l l}
 &  &  z = \sum_j \alpha_j \br{d_j - \epsilon},\\
 (x, y, z):& \exists \alpha ~\mathrm{s.t.}~& \alpha \in \mathrm{Relax}(\exp)(y, l_K, u_K)  \\ 
 && y \in [l_K, u_K]
 \end{array}
 \right\}  
\end{align}
A similar procedure can be extended to the downstream specifications stated in \eqref{eq:digit_sum}.

\newcommand{\Tr}{\mathop{\bf Tr}}

\paragraph{Specifications involving quadratic functions.} The energy specification \eqref{eq:energy_spec} involves a quadratic constraint on both inputs and outputs of the network. Here we show how general quadratic constraints are convex relaxable. Consider an arbitrary quadratic specification on the input and output of the network:
\begin{align}
F\br{x, y} =\tran{\begin{bmatrix}1 \\ x \\ y \end{bmatrix}} Q \begin{bmatrix}1 \\ x \\ y \end{bmatrix} \leq 0 \qquad \forall x \in \Sc_{\I},\  y \in \Sc_{\ou}  \label{eq:QuadSpec}, 
\end{align}
where $Q \in \R^{(1 + n + m) \times (1 + n + m)}$ is a fixed matrix.
This can be rewritten as $F\br{x, y} = \Tr(Q X)$ where $X = \alpha \tran{\alpha}$, $\Tr(X)=\sum_i^n X_{ii}$,
and $\alpha^T = \begin{bmatrix} 1 &  x^T & y^T \end{bmatrix}$.
Note that $X_{ij} = \alpha_i \alpha_j$, and we have bounds on $x$ and $y$ which implies bounds on $\alpha$. We denote these bounds as $l_i \leq \alpha_i \leq u_i$ with $l_0=u_0=1$. The set $\{(x, y, z): F(x, y)=z, x \in \Sc_{\I}, y \in \Sc_{\ou}\}$ can be rewritten as
\[
\left\{(x, y, z): z=\Tr(QX), X=\alpha \alpha^T, l \leq \alpha \leq u, \alpha^T = \begin{bmatrix} 1 &  x^T & y^T \end{bmatrix}\right\}.
\]
Further we note that along the diagonal, we have the functional form $X_{ii} = \alpha_i^2$. This quadratic form is bounded above by the linear interpolation between $(l, l^2)$ and $(u, u^2)$ which is:
\[
G_{\mathrm{Quad}}(\alpha, l, u) = (l+u)\alpha -ul.
\]
We can relax the constraint $X=\alpha \alpha^T$ as follows (for more details see Appendix~\ref{A:quadratic}):
\begin{equation}
    \mathrm{Relax}\br{\mathrm{Quad}}(\alpha, l, u) = \left\lbrace X: \begin{array}{l} 
X - l \alpha^T - \alpha l^T  + ll^T \geq 0 \\
X - u \alpha^T - v \alpha^T + uu^T \geq 0 \\
X - l \alpha^T - \alpha u^T + lu^T \leq 0 \\
\alpha^2 \leq \mathrm{diag}(X) \leq G_{\mathrm{Quad}}(\alpha, l, u)\\
X - \alpha \alpha^T \succeq 0 
\end{array}
    \right\rbrace\label{eq:relax_quad_}
\end{equation}
where the notation $A \succeq B$ is used to indicate that $A-B$ is a \emph{symmetric positive semidefinite} matrix. Given the above relaxation we can show (refer to Appendix~\ref{A:quadratic}) that the set below satisfies \eqref{eq:convex_relax}:
\begin{align}
\C_{\mathrm{quad}}\br{\specF, \Sc_{\I}, \Sc_{\ou}}= & 
\left\{(x, y, z): \exists X, \alpha \text{ s.t}
\begin{array}{lll}
~&z=Tr(QX) \\
~& X \in \mathrm{Relax}\br{\mathrm{Quad}}(\alpha, l, u)\\
& \alpha^T = \begin{bmatrix} 1 &  x^T & y^T \end{bmatrix}
\end{array}\right\}.
\end{align}\label{eq:RelaxQuadSpec}

\section{Experiments}\label{sec:experiments}
We have proposed a novel set of specifications to be verified as well as new verification algorithms that can verify whether these specifications are satisfied by neural networks. In order to validate our contributions experimentally, we perform two sets of experiments: the first set of experiments tests the ability of the convex relaxation approach we develop here to verify nonlinear specifications and the second set of experiments tests that the specifications we verify indeed provide useful information on the ability of the ML model to solve the task of interest. 
\paragraph{Tightness of verification:} We proposed incomplete verification algorithms that can produce conservative results, i.e., even if the model satisfies the specification our approach may not be able to prove it. In this set of experiments, we quantify this conservatism on each of the specifications from Section \ref{sec:Specs}. In order to do this, we compare the results from the verification algorithm with those from the gradient based falsification algorithm \eqref{eq:pgd} and other custom baselines for each specification. We show that our verification algorithm can produce tight verification results, i.e., the fraction of instances on which the verification algorithm fails to find a proof of a correctness but the falsification algorithm cannot find a counter-example is small. In other words, empirically our strategy is able to verify most instances for which the property to be verified actually holds.
\paragraph{Value of verification:} In this set of experiments, we demonstrate the effectiveness of using verification as a tool to detect failure modes in neural networks when the specifications studied are not satisfied. For each of the specifications described in Section \ref{sec:Specs}, we compare two models A and B which satisfy our specification to varying degrees (see below and Appendix~\ref{A:training_deets} for the details). We show that models which violate our convex-relaxed specifications more can exhibit interesting failure modes which would otherwise be hard to detect.
\subsection{Evaluation metrics}
Ideally, we would like to verify that the specification is true for all ``reasonable'' inputs to the model. However, defining this set is difficult in most applications - for example, what is a reasonable input to a model recognizing handwritten digits? Due to the difficulty of defining this set, we settle for the following weaker verification: the specification is true over $\Sc_{\I}(x^\mathrm{nom}, \delta) = \lbrace x : \parallel x- x^\mathrm{nom} \parallel_{\infty}\leq \delta \rbrace$, 
where $x^\mathrm{nom}$ is a point in the test set. Since the test set consists of valid samples from the data generating distribution, we assume that inputs close to the test set will also constitute reasonable inputs to the model. We then define the following metrics which are used in our experiments to gauge the tightness and effectiveness our verification algorithm:\\
{\bf Verification bound:} This is the fraction of test examples $x^\mathrm{nom}$ for which the specification is provably satisfied over the set $\Sc_{\I}(x^\mathrm{nom}, \delta)$ using our verification algorithm. \\
{\bf Adversarial bound:} This is the fraction of test examples $x^\mathrm{nom}$ for which the falsification algorithm based on \eqref{eq:pgd} was \emph{not able to find a counter-example in the set $\Sc_{\I}(x^\mathrm{nom}, \delta)$}. \\
The real quantity of interest is the fraction of test examples for which the specification is satisfied - denoted as $\beta$. We would like the verification bound to be close to $\beta$. However, since $\beta$ is difficult to compute (due to the intractability of exact verification), we need a measurable proxy. To come up with such a proxy, we note that the verification bound is always lower than the adversarial bound, as the attack algorithm would not be able to find a counter-example for any test example that is verifiable. Formally, the following holds:
\[
\mbox{Verification bound}~\leq~ \beta~\leq~ \mbox{Adversarial bound}.
\]
Thus $|\mbox{Verification bound}-\beta|~\leq~ \mbox{Adversarial bound}-\mbox{Verification bound}$. Hence, we use the difference between the Adversarial bound and the Verification bound to gauge the tightness of our verification bound. 

\subsection{Experimental setup}
For each of the specifications we trained two networks (referred to as model A and B in the following) that satisfy our specification to varying degrees. In each case model A has a higher verification bound than model B (i.e, it is consistent with the specification over a larger fraction of the test set) -- while both networks perform similarly wrt. predictive accuracy on the test set. The experiments in Section \ref{sec:ExptSetup} only use model A while experiments in Section \ref{sec:ExptOther} uses both model A and model B. In brief (see the Appendix \ref{A:training_deets} for additional details): \\
\textbf{MNIST and CIFAR-10:} For these datasets both models was trained to maximize the log likelihood of true label predictions while being robust to adversarial examples via the method described in \citep{Wong2018}. The training for model A places a heavier loss than model B when robustness measures are violated.\\
\textbf{Mujoco:} To test the energy specification, we used the Mujoco physics engine \citep{todorov2012mujoco} to create a simulation of a simple pendulum with damping friction. We generate simulated trajectories using Mujoco and use these to train a one-step model of the pendulum dynamics. Model B was trained to simply minimize $\ell_1$ error of the predictions while model A was trained with an additional penalty promoting conservation of energy.

\subsection{Tightness of Verification}\label{sec:ExptSetup}
\begin{figure}[h]
    \centering
    \includegraphics[width=0.325\textwidth]{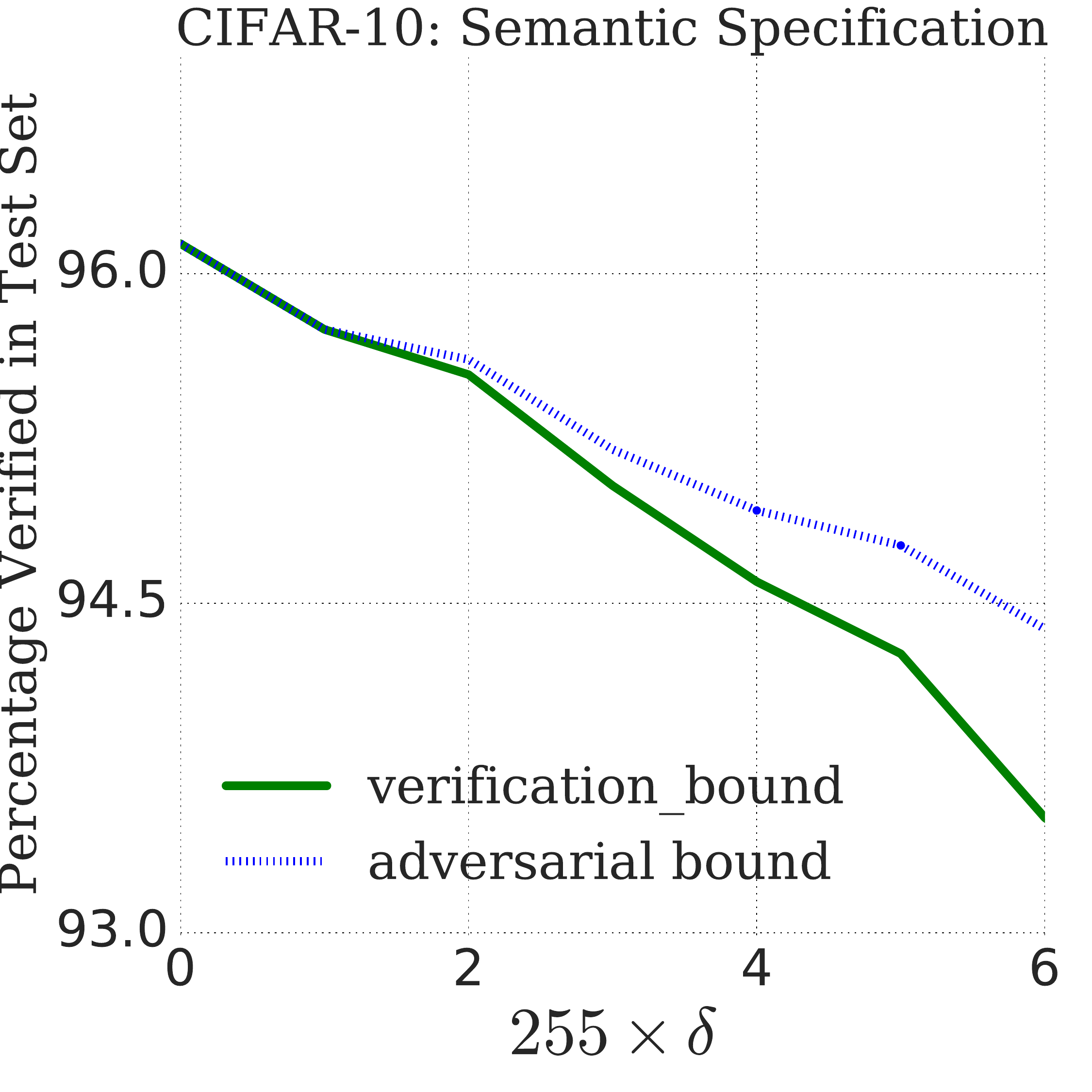}  
    \includegraphics[width=0.325\textwidth]{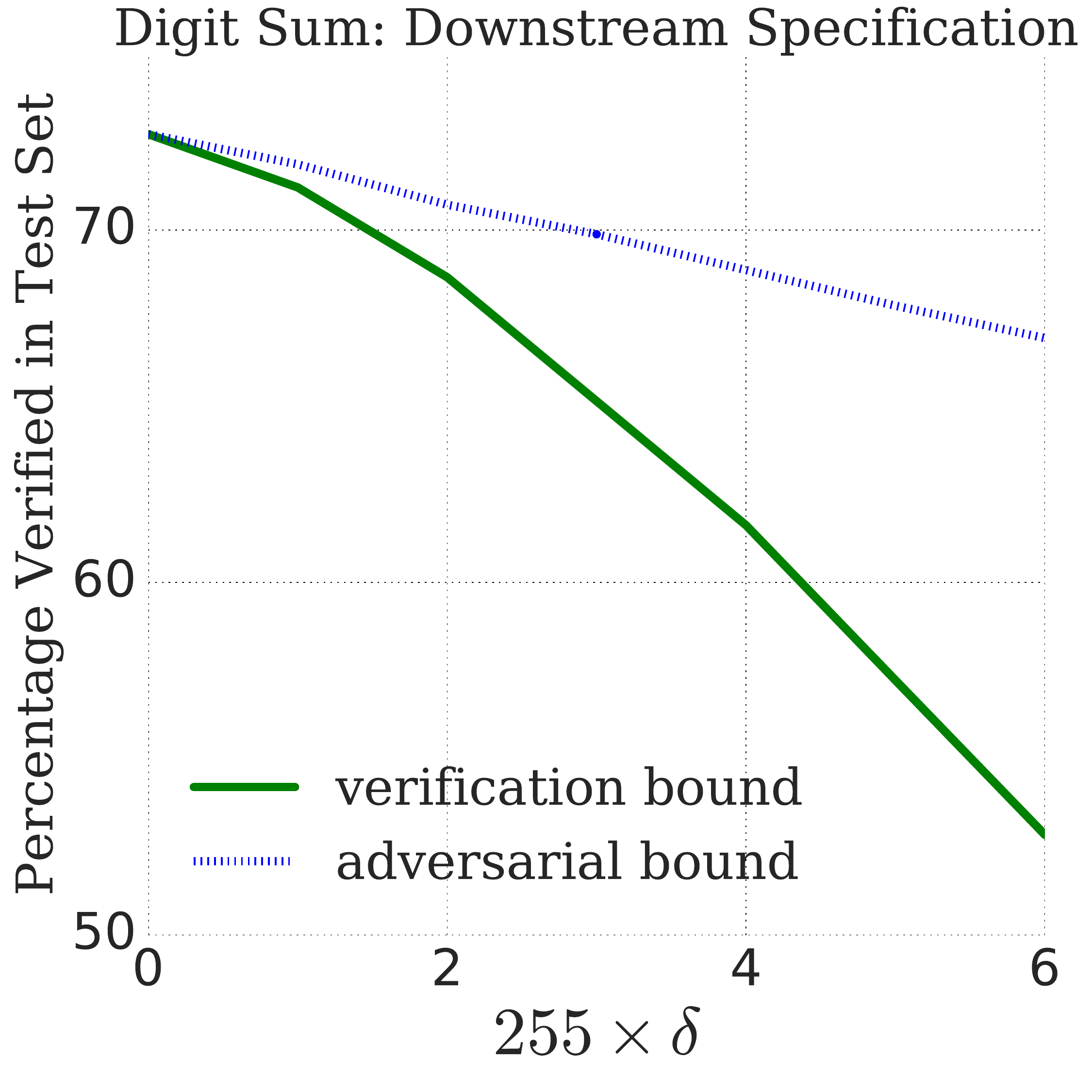}
    \includegraphics[width=0.325\textwidth]{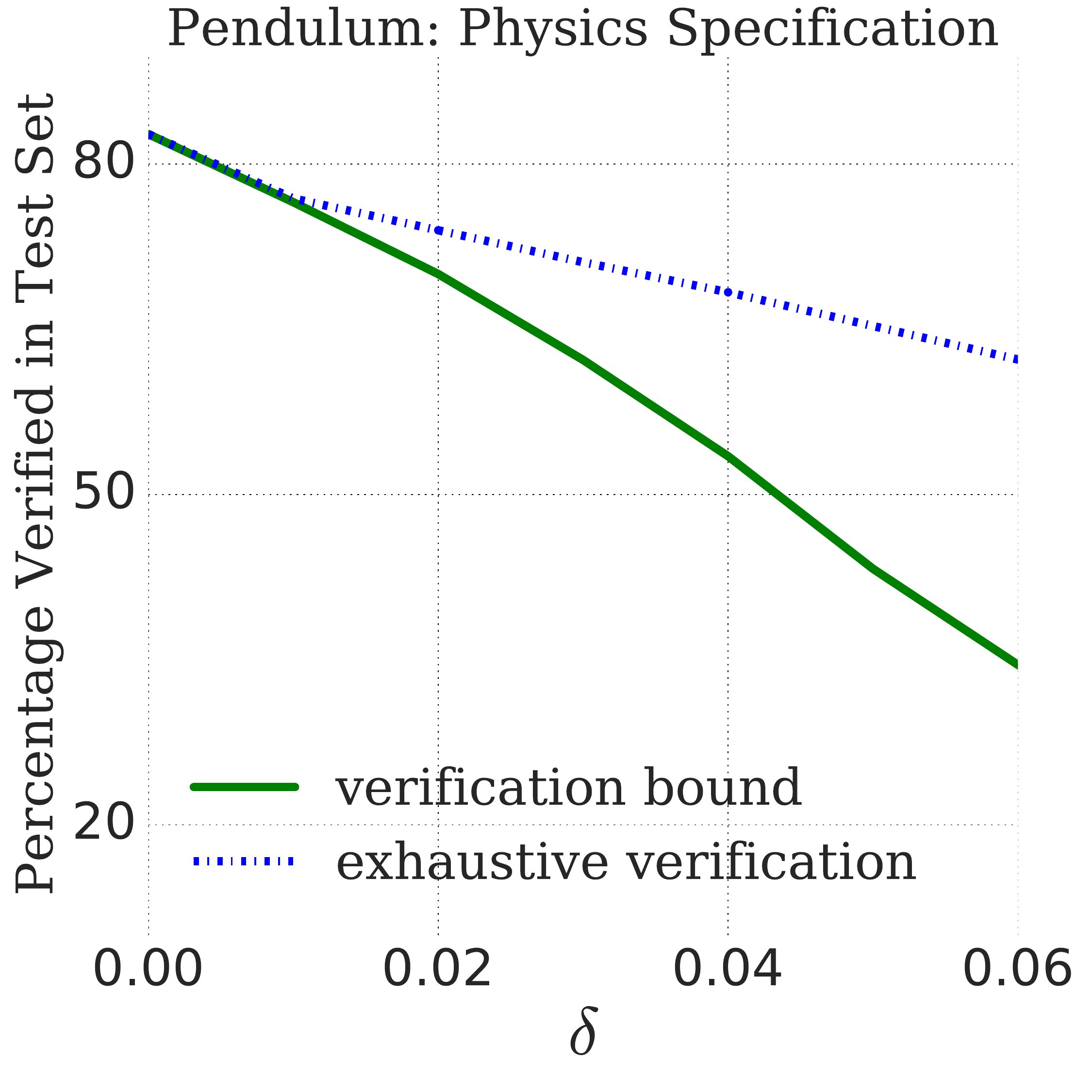}
    \caption{For three specifications, we plot the verification bound and adversarial bound as a function of perturbation size on the input.}
    \label{fig:baseline_experiments}
\end{figure}
\paragraph{Semantic specifications on CIFAR-10.} We study the semantic distance specification \eqref{eq:semantic_distance} in the context of the CIFAR-10 dataset. We define the distances $d(i,j)$ between labels as their distance according to Wordnet \citep{miller1995wordnet} (the full distance matrix used is shown in Appendix~\ref{A:cifar_dist}). We require that the expected semantic distance from the true label under an adversarial perturbation of the input of radius $\delta$ is smaller than $\epsilon = .23$. The threshold $0.23$ was chosen so that subcategories of man-made transport are semantically similar and subcategories of animals are semantically similar. The results are shown in Fig.~\ref{fig:baseline_experiments} (left). For all $\delta \leq 6/255$, the gap between the adversarial and verification bound is smaller than $.9\%$.

\paragraph{Errors in a system predicting sums of digits.} We study a specification of the type \eqref{eq:digit_sum} with $N=2$ and $\epsilon=1$. The results are shown in Figure~\ref{fig:baseline_experiments} (middle). For $\delta \leq 2/255$, the gap between the adversarial and verification bound is less than $1.5\%$. At $\delta=6/255$, the gap increases to about $13\%$. We suspect this is because of the product of softmax probabilities, that degrades the tightness of the relaxation. Tighter relaxations that deal with products of probabilities are in interesting direction for future work - signomial relaxations \citep{signomial} may be useful in this context.

\paragraph{Conservation of energy in a simple pendulum.} We study energy conservation specification in a damped simple pendulum, as described in \eqref{eq:energy_spec} and learn a  dynamics model $x_{t+1}=f(x_t)$. Since the state $x_t$ is only three dimensional (horizontal position, vertical position and angular velocity), it is possible to do exhaustive verification (by discretizing the state space with a very fine grid) in this example, allowing us to create a ``golden baseline''. Note that this procedure is computationally expensive and the brute force search is over 239000 points. The results are shown in Fig.~\ref{fig:baseline_experiments} (right). At a perturbation radius up until $0.02$, the exhaustive verification can prove that the specification is not violated for 73.98\% of the test data points while using our convex relaxation approach achieved a lower bound of 70.00\%. As we increase the size of perturbation up to 0.06, the gap between the two increases to 22.05\%. However, the convex relaxation runs an order of magnitude faster given the same computational resources - this gap in computation time would increase even further as the state dimension grows.

\subsection{Value of verification}\label{sec:ExptOther}
Consider the following scenario: We are given a pair of ML models for a task - model A and model B. We do not necessarily know how they have been trained but would like to evaluate them before deploying them. Standard evaluation in terms of predictive performance on a hold-out test set cannot distinguish between the models. We consider using verification of a specification relevant to the task as an additional way to evaluate these models. If model A is more verifiable (the verification bound is higher) than the other, we hope to show that model A is better at solving the task than model B. The following experiments demonstrate this for each of the specifications studied in this paper, demonstrating the value of verification as a diagnostic tool for tasks with complex specifications. 
\paragraph{Energy specifications and cascading errors in model dynamics:}
\begin{figure}[htb]
  \centering
  \includegraphics[width=.8\textwidth]{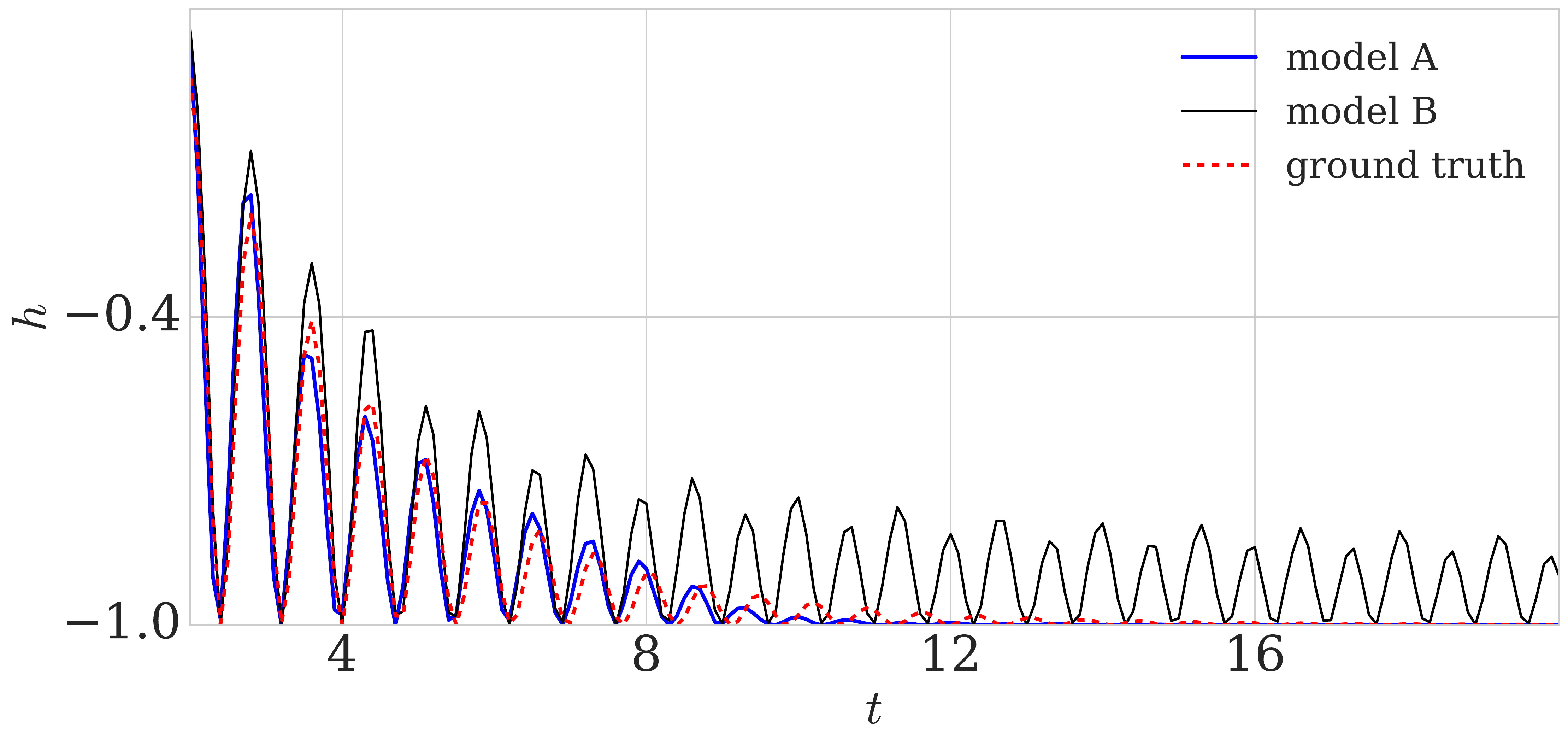}
\caption{The red dashed line displays the pendulum's ground truth trajectory (height vs time). The black line shows a rollout from model B, the blue line shows a rollout from model A. The more verifiable model (model A) clearly tracks the ground truth trajectory significantly more closely.}\label{fig:energy}
\end{figure}
Models A and B were both trained to learn dynamics of a simple pendulum (details in Appendix \ref{A:training_deets}). At $\delta=.06$~\footnote{which is equivalent to perturbing the angle of the pendulum by $\theta = 0.04$ radians and the angular velocity by $0.6m/s$}, the verification bound (for the energy specification \eqref{eq:energy_spec}) for model A is 64.16\% and for model B is 34.53\%. When a model of a dynamical systems makes predictions that violate physical laws, its predictions can have significant errors over long horizons. To evaluate long-term predictive accuracy of the two models, we rollout of a long trajectory (200 steps) under models A, B and compare these to the ground truth (a simulation using the actual physics engine Mujoco). The results are shown in Fig.~\ref{fig:energy}. For model A the rolled out trajectory matches the ground truth much more closely. Further, the trajectory from model A eventually stabilizes at the vertical position (a stable equilibrium for the pendulum) while that from model B keeps oscillating.
\begin{figure}[htb]
\centering
\begin{minipage}{0.8\textwidth}
    \centering
    \begin{overpic}[trim={3cm 1cm 3cm 1cm},clip, width=0.46\textwidth]{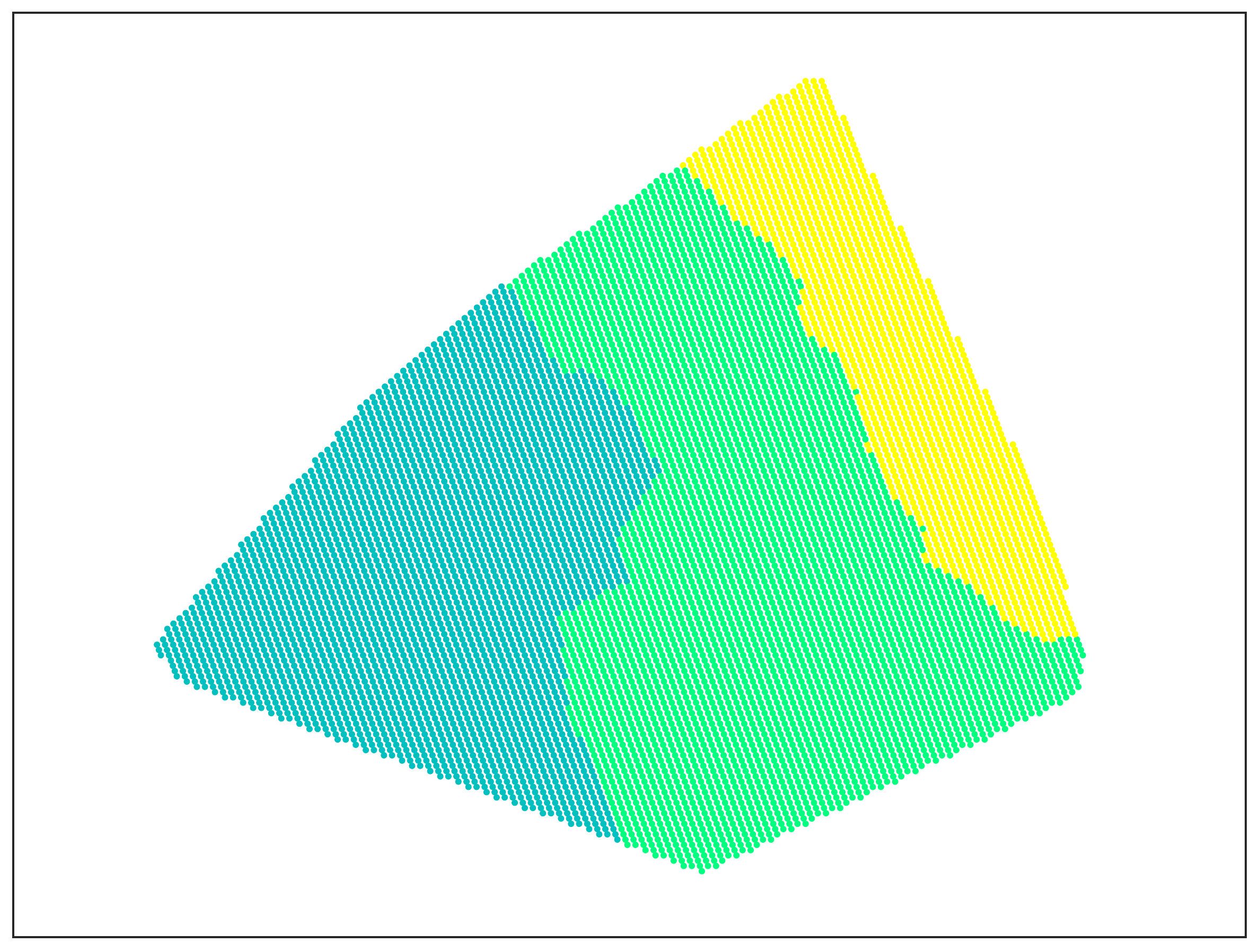}
    \put(10, 30){\color{black}\Large{airplane}}
    \put(53, 35){\color{black}\Large{ship}}
    \put(73, 55){\color{black}\Large{truck}}
    \put(0, 80){\color{black}\large{Model A:}}
    \end{overpic}
    \begin{overpic}[trim={3cm 1cm 3cm 1cm},clip, width=0.46\textwidth]{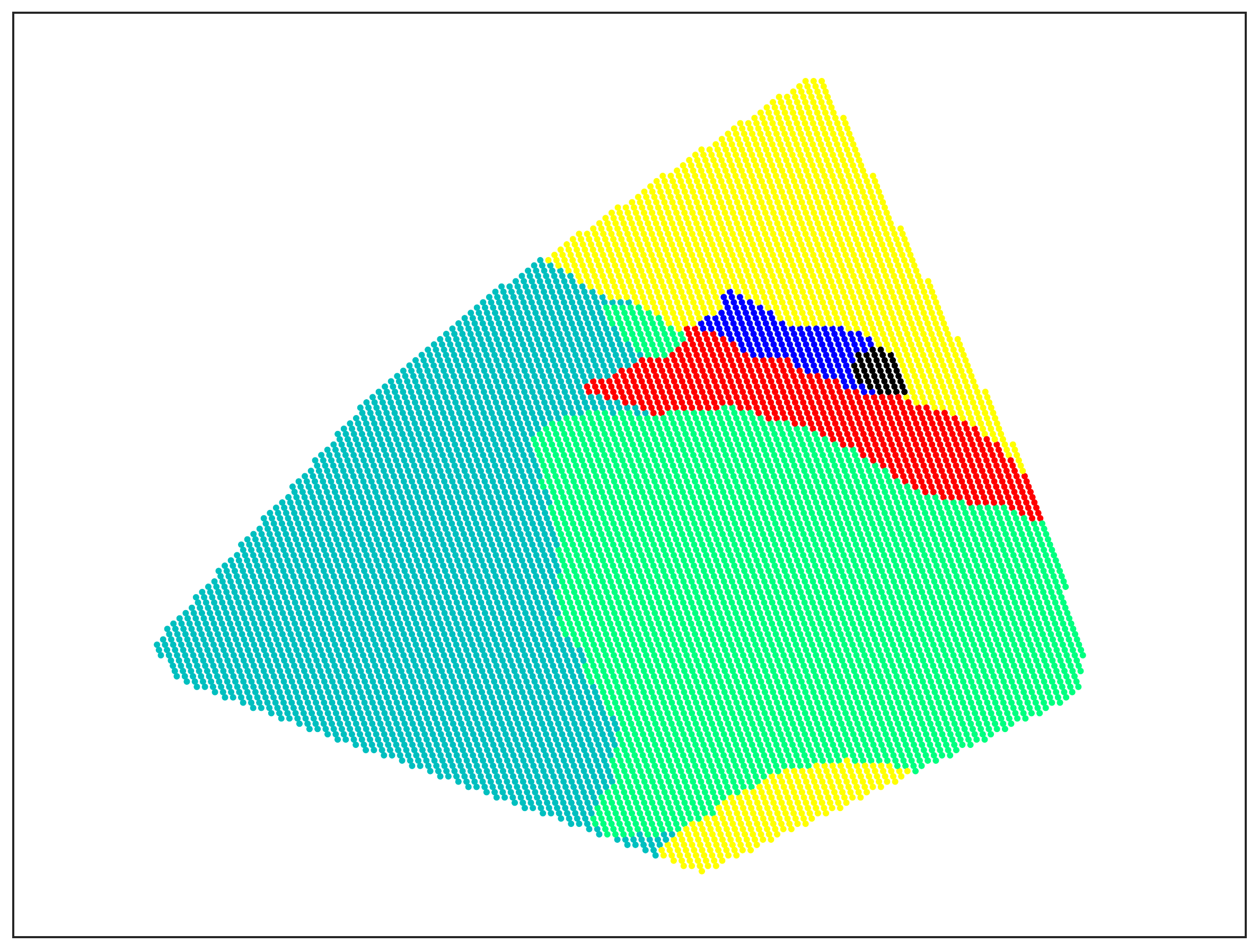}
    \put(10, 30){\color{black}\Large{airplane}}
    \put(53, 35){\color{black}\Large{ship}}
    \put(60, 65){\color{black}\Large{truck}}
    \put(0, 80){\color{black}\large{Model B:}}
    \end{overpic}
    \end{minipage}
    \caption{The projection of the decision boundaries onto a two dimensional surface formed by interpolating between three images belonging to the same semantic category (vehicles) - aeroplane (cyan), ship (green) and truck (yellow).The \textcolor{red}{red}/\textcolor{blue}{blue}/\textbf{black} regions represent \textcolor{red}{bird}/\textcolor{blue}{cat}/\textbf{frog} respectively).}\label{fig:semantic}
\end{figure}
\paragraph{Expected semantic distance.}
Here, we verify models A and B with respect to the semantic specification \eqref{eq:semantic_specification} and a verification bound of 93.5\% for model A and  84.3\% for model B. Thus, we expect that model B will be more susceptible to produce labels from semantically dissimilar categories than model A under small perturbations of the input. To show this, we visualize the decision boundaries of the two models projected onto a 2-D surface created by interpolating between three images which are in semantically similar categories (man-made transportation). Fig.~\ref{fig:semantic} shows the decision boundaries. Indeed, we can observe that the model with a higher verification bound displays higher semantic consistency in the decision boundaries.
\section{Conclusions}
We have developed verification algorithms for a new class of \emph{convex-relaxable specifications}, that can model many specifications of interest in prediction tasks (energy conservation, semantic consistency, downstream task errors). We have shown experimentally that our verification algorithms can verify these specifications with high precision while still being tractable (requiring solution of a convex optimization problem linear in the network size). We have shown that verifying these specifications can indeed provide valuable diagnostic information regarding the ultimate behavior of models in downstream prediction tasks. While further work is needed to scale these algorithms to real applications, we have made significant initial steps in this work. Further, inspired by \citep{Wong2018, Wong2018b, raghunathan2018certified}, we believe that folding these verification algorithms can help significantly in training models that are consistent with various specifications of interest. We hope that these advances will enable the development of general purpose specification-driven AI, where we have general purpose systems that can take specifications stated in a high level language along with training data and produce ML models that achieve high predictive accuracy while also being consistent with application-driven specifications. We believe that such an  advance will significantly aid and accelerate deployment of AI in applications with safety and security constraints.

\section*{Acknowledgements}
We thank Jost Tobias Springenberg and Jan Leike for careful proof-reading of this paper.

\bibliographystyle{iclr2019_conference}
\newpage

\appendix

\section{Proof for Lemma 3.1}\label{A:lemma31}
\begin{figure}
    \centering
    \includegraphics[width=0.45\textwidth]{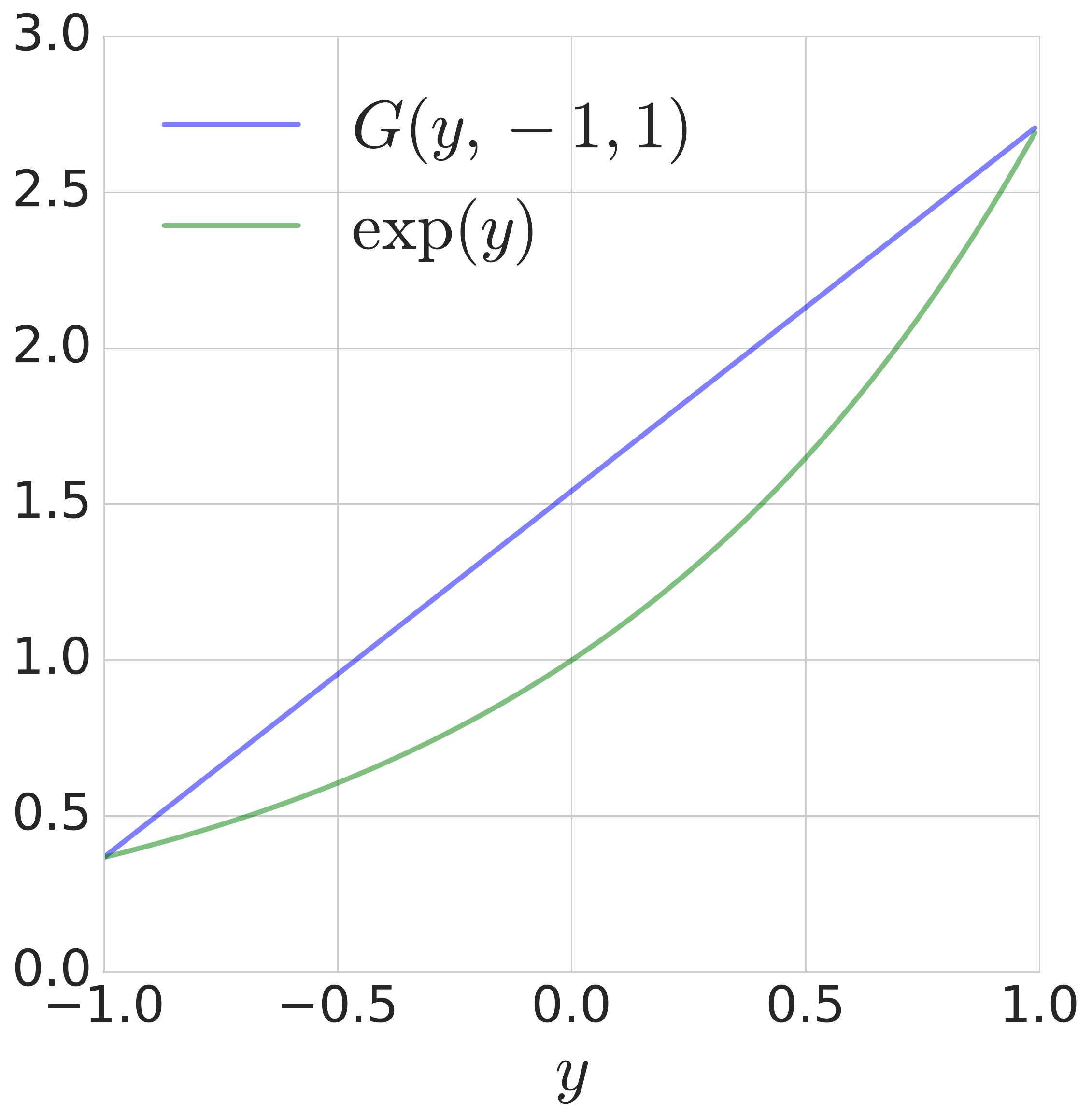}
    \includegraphics[width=0.45\textwidth]{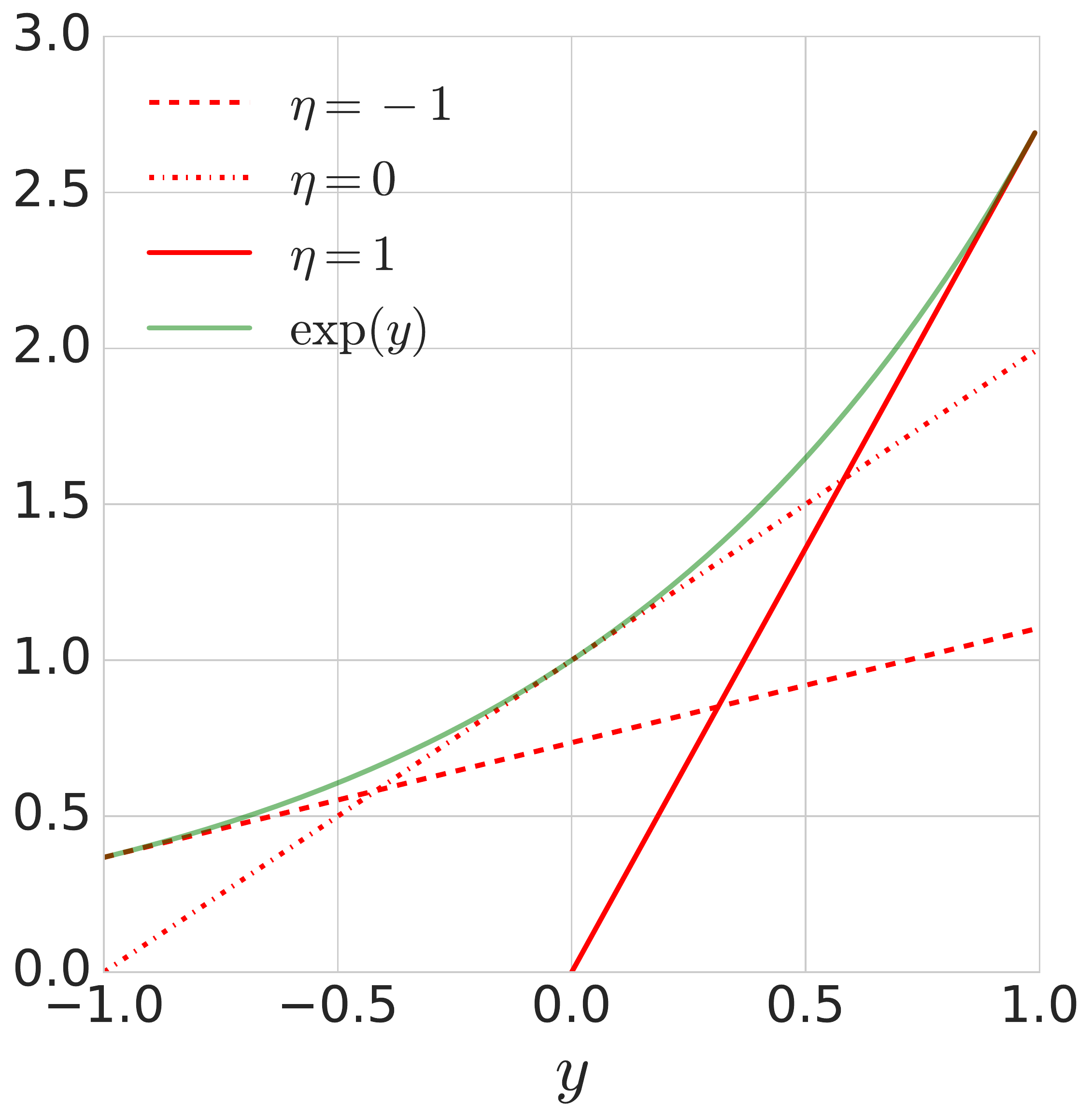}
    \caption{This plots the convex constraint $G(y, 1, -1)$ (left)  and the tangential constraints for an exponential function when $y\in [l, u]$ where $l=-1$ and $u=1$.}
    \label{fig:exponential}
\end{figure}
This can be done by proof of contradiction. Suppose that the optimal value for $\mathcal{C}\br{F, \Sc_{\I}, \Sc_{\ou}}$ is less  than zero, but there exists a point in the set 
\begin{equation}
(x_F, y_F, z_F) \in \mathcal{T}\br{F, \Sc_{\I}, \Sc_{\ou}} \nonumber
\end{equation}
such that $F(x_F, y_F) > 0$, then by definition $(x_F, y_F, z_F) \in \mathcal{C}\br{F, \Sc_{\I}, \Sc_{\ou}}$ which gives a contradiction.
Further we note that the objective in \eqref{eq:ConvRelax} is linear and the constraints imposed form a convex set thus it is a convex optimization problem.

\section{Softmax Relaxation}\label{A:lemma32}
We outline in detail the convex relaxations we used for the softmax function to obtain the set $C_{\mathrm{smax}}(F, \Sc_{\I}, \Sc_{\ou})$. Here we note that the denominator of a softmax cannot be zero as it is an exponential function over the real line, thus the following is true:
\[
F(x, y) = \sum_i d_i \frac{\exp(y_i)}{\sum_j \exp(y_j)} = z
\Leftrightarrow \sum_i (d_i - z) \exp(y_i) = 0.
\]
\begin{equation}
\Rightarrow \mathcal{T} (F, \Sc_{\I}, \Sc_{\ou}) = \left\lbrace \br{x, y, z}: \sum_i (d_i - z) \exp(y_i) = 0, x \in \Sc_{\I} , l_K \leq y\leq u_K \right\rbrace \nonumber.
\end{equation}
For the relaxation of the exponential function on auxiliary variable $\alpha$, we note that the convex constraint is the linear interpolation between $(l_{K,i}, \exp(l_{K,i}))$ and $(u_{K,i}, \exp(u_{K,i}))$ shown in Fig.~\ref{fig:exponential}. This linear interpolation is given by the following:
\[
G(y_i, l_K, u_K) = \frac{\exp(u_{K,i}) - \exp(l_{K,i}))}{u_{K,i}-l_{K,i}} y_i + \frac{u_{K,i} \exp(l_{K,i}) - l_{K,i} \exp(u_{K,i})}{u_{K,i}-l_{K,i}}.
\]
The bounds $\exp(y_i) \leq  \alpha \leq G_U(y_i)$ defines a convex set on $\alpha$, further we note that 
\[
\exp(y_i) \leq \exp(y_i) \leq G_U(y_i).
\]
Thus, the following holds
\[
\mathcal{T}(F, \Sc_{\I}, \Sc_{\ou}) \subseteq \mathcal{C}_{\mathrm{smax}}(F, \Sc_{\I}, \Sc_{\ou})\qed
\]
For implementation of the lower bound $\exp(y_i) \leq \alpha_i$, we transform this bound into a set of linear tangential constraints (shown in Fig.~\ref{fig:exponential}), such that it can be solved with an LP. The tangent at point $\eta \in [l, u]$ is given by
\[
G_L(y_i, \eta) = \exp\br{\eta} y_i + \br{\exp\br{\eta}-\eta\exp\br{\eta}}
\]
This allows us to relax the exponential function by the following set of linear constraints:
\[
G_L(y_i, \eta) \leq \alpha \leq G_U(y_i) \quad \forall \eta \in [l_{i}, u_{i}]\:.
\]

\section{Quadratic Relaxation}\label{A:quadratic}
Here we outline in detail the convex relaxation of the quadratic function to obtain the set $C_{\mathrm{quad}}(F, \Sc_{\I}, \Sc_{\ou})$. The quadratic function is of the form
\[
F(x, y) = \begin{pmatrix} 1 \\ x \\ y \end{pmatrix}^T Q \begin{pmatrix} 1 \\ x \\ y \end{pmatrix}.
\]
For simplicity of notation we will use $\alpha = (1, x, y)$, we denote its ith element as $x_i$. For this particular case, the specification can be written as $F(x, y) = \sum_{ij}Q_{ij} \alpha_{i} \alpha_{j}$. Now we derive our relaxation for the quadratic function, $\mathrm{Relax}(\mathrm{Quad})(\alpha, l, u)$. With the constraints we place on $\alpha$: $l_i \leq \alpha_i \leq u_i$, we can use the McCormick envelopes
\begin{align}
    (\alpha_i - l_j)(\alpha_i - u_i) \leq 0 \nonumber \\
    (\alpha_j - u_j)(\alpha_i - l_i) \leq 0 \nonumber \\
    (\alpha_j - u_j)(\alpha_i - u_i) \geq 0 \nonumber \\
    (\alpha_j - l_j)(\alpha_i - l_i) \geq 0 \label{eq:mccormick}
\end{align}
to obtain constraints on the quadratic function. Given \eqref{eq:mccormick} we can relax $X= \mathbf{\alpha}\mathbf{\alpha}^T$ with the following set of constraints:
\begin{align}
    X_{ij} - l_j\alpha_i - u_i \alpha_j + l_j u_i \leq 0 \nonumber \\
     X_{ij} - l_i\alpha_j - u_j \alpha_i + l_i u_j \leq 0 \nonumber \\
     X_{ij} - u_i\alpha_i - u_j \alpha_j + u_i u_j \geq 0 \nonumber \\
     X_{ij} - l_i\alpha_i - l_j \alpha_j + l_i l_j\geq 0 \nonumber .
\end{align}
We can enforce extra constraints down the diagonal of the matrix $X$, as the diagonal is of the form $X_{ii} = \alpha_i^2$. Since this is a convex function on the auxiliary variable $\alpha_i$, we can again use the convex constraint which is the linear interpolation between points $(l, l^2)$ and $(u, u^2)$, given by:
\[
    G_{\mathrm{Quad}}(\alpha, l, u) = (l + u) \alpha - ul
\]
Thus the following denotes a valid convex set on the diagonal of $X$:
\[
\alpha^2 \leq \mathrm{diag}(X) \leq G_{\mathrm{Quad}}(\alpha, l, u) 
\]

Further we note that $X_{ij}$ is a symmetric semi-positive definite matrix, we can impose the following constraint (see \citet{luo2010semidefinite}):
\[
X - \mathbf{\alpha}\mathbf{\alpha}^T \succeq 0.
\]
Explicitly we enforce the constraint:
\begin{equation}
\begin{pmatrix}
1 & \alpha^T \\
\alpha & X \end{pmatrix} \succeq 0 \label{Eq:SDP_constraint}
\end{equation}
A thing to note is that constraints 
\begin{align}
     X_{ij} - l_j\alpha_i - u_i \alpha_j + l_j u_i \leq 0 \nonumber \\
     X_{ij} - l_i\alpha_j - u_j \alpha_i + l_i u_j \leq 0 \nonumber,
\end{align}
becomes the same constraint when $X - \mathbf{\alpha}\mathbf{\alpha}^T \succeq 0$ is enforced, since $X$ is a symmetric matrix thus one was dropped in \eqref{eq:relax_quad_}. 
We note that $\mathbf{\alpha}\mathbf{\alpha}^T \in \mathrm{Relax}(\mathrm{Quad})(X, l, u)$, therefore the following holds:
\[
\mathcal{T}(F, \Sc_{\I}, \Sc_{\ou}) \subseteq \mathcal{C}_{\mathrm{quad}}(F, \Sc_{\I}, \Sc_{\ou}) \qed
\]

\section{Training Details}\label{A:training_deets}
This paper is primarily focused on new techniques for verification rather than model training. However, generally speaking, training neural networks using standard methods does not produce verifiable models. In fact, it has been observed that if one trains networks using standard cross entropy loss or even using adversarial training, networks that seem robust empirically are not easily verified due to the incomplete nature of verification algorithms \citep{Wong2018}. In \citet{Wong2018, Wong2018b, gowal2018effectiveness}, the importance of training networks with a special loss function that promotes verifiability was shown when attempting to obtain verifiably robust networks (against $\ell_\infty$ adversarial perturbations). Similar observations have been made in \citep{gehr2018ai, raghunathan2018certified}. The specifications we study in this paper (\ref{eq:semantic_distance},\ref{eq:energy_spec}, \ref{eq:digit_sum}) build upon the standard adversarial robustness specification. Hence, we use the training method from \citet{Wong2018b} (which has achieved state of the art verifiable robustness against $\ell_\infty$ adversarial perturbations on MNIST and CIFAR-10) to train networks for our CIFAR-10 and MNIST experiments.

\paragraph{Cifar 10: Model A} We use a network that is verifiably robust to adversarial pertubations of size $8/255$ (where $255$ is the range of pixel values) on $24.61\%$ of the test examples, with respect to the standard specification that the output of the network should remain invariant to adversarial perturbations of the input. The network consists of 4 convolutional layers and 3 linear layers in total 860000 paramters.
\paragraph{Cifar 10: Model B} We use a network that is verifiably robust to adversarial pertubations of size $2/255$ (where $255$ is the range of pixel values) on $39.25\%$ of the test examples, with respect to the standard specification that the output of the network should remain invariant to adversarial perturbations of the input. The architecture used is the same as Model A above.

\paragraph{Mujoco} We trained a network using data collected from the Mujoco simulator \citep{todorov2012mujoco}, in particular, we used the pendulum swingup environment in the DM control suite \citep{tassa2018deepmind}. The pendulum is of length 0.5m and hangs 0.6m above ground. When the perturbation radius is 0.01, since the pendulum is 0.5m in length, the equivalent perturbation in space is about 0.005 m in the x and y direction. The perturbation of the angular velocity is $\omega \pm 0.1$ radians per second.

The pendulum environment was used to generate 90000 (state, next state) pairs, 27000 was set aside as test set. For training the timestep between the state and next state pairs is chosen to be 0.1 seconds (although the simulation is done at a higher time resolution to avoid numerical integration errors). 

The pendulum models consists of two linear layers in total 120 parameters and takes $(\cos(\theta), \sin(\theta), \omega/10)$ as input. Here $\theta$ is the angle of the pendulum and $\omega$ is the angular velocity. The data is generated such that the initial angular velocity lies between (-10, 10), by scaling this with a factor of 10 ($\omega/10$) we make sure $(\cos(\theta), \sin(\theta), \omega/10)$ lies in a box where each side is bounded by [-1, 1]. 

\paragraph{Pendulum: Model A} We train with an $\ell_1$ loss and energy loss on the next state prediction, the exact loss we impose this model is (we denote $(w_T, h_T, s\omega_T)$ as ground truth state):
\begin{align}
l(f) = &\underbrace{\parallel f(w, h, s\omega) - (w_T, h_T, s\omega_T) \parallel}_{\ell_1\text{ loss}} +  \underbrace{|E(f(w, h, s\omega)) - E((w_T, h_T, s\omega_T)) |}_{\text{energy difference loss}} +\\ &\underbrace{\ReLU (E(f(w, h, s\omega)) - E(w, h, s\omega))}_{\text{increase in energy loss}}\:,
\end{align}
where $s=0.1$ is a scaling parameter we use on angular velocity. $E$ is given in \eqref{eq:energy_pendulum}. The loss we calculate on the test set is of the $\ell_1$ loss only and we obtain $0.072$.

\paragraph{Pendulum: Model B} We train with an $\ell_1$ loss on the next state prediction
\[
l(f) = \parallel f(w, h, s\omega) - (w_T, h_T, s\omega_T) \parallel\:,
\]
The $\ell_1$ loss on the test set is $0.054$.

\paragraph{Digit Sum: Model A} We use a network, consisting of two linear layers in total 15880 parameters. This network is verifiably robust to adversarial pertubations of size $25/255$ (where $255$ is the range of pixel values) on $68.16\%$ of the test examples with respect to the standard specification. The standard specification being that the output of the network should remain invariant to adversarial perturbations of the input. 

\paragraph{Falsification:}The falsification method \eqref{eq:pgd} is susceptible to local optima. In order to make the falsification method stronger, we use 20 random restarts to improve the chances of finding better local optima.

\section{Scaling and Implementation:}
The scaling of our algorithm is dependent on the relaxations used. If the relaxations are a set of linear constraints, the scaling is linear with respect to the input and output dimensions of the network, if they are quadratic constraints they will scale quadratically.

For the Semantic Specification and Downstream Specification, the relaxations consisted of only linear constraints thus can be solved with a linear program (LP). For these tasks, we used GLOP as the LP solver. This solver takes on approximately 3-10 seconds per data point on our desktop machine (with 1 GPU and 8G of memory) for the largest network size we handled which consists of 4 convolutional layers and 3 linear layers and in total 860000 parameters. For the conservation of energy, we used SDP constraints - this relaxation scales quadratically with respect to the input and output dimension of the network. To solve for these set of constraints we used CVXPY, which is much slower than GLOP, and tested on a smaller network which has two linear layers with 120 parameters.

\section{Distances between CIFAR-10 labels in WordNet}\label{A:cifar_dist}
The synsets chosen from WordNet are:  airplane.n.01, car.n.01, bird.n.01, cat.n.01, deer.n.01, dog.n.01, frog.n.01, horse.n.01, ship.n.01, truck.n.01. The path similarity, here we denote as $x$, in WordNet is between [0, 1/3], to make this into a distance measure between labels which are not similar we chose $d = 1/3 - x$.
\begin{table}[htb]
    \centering
    \begin{tabular}{c |  c  c c c c c c c c }
         & automobile & bird & cat &  deer & dog & frog & horse & ship & truck \\\hline
        airplane & 0.22 & 0.27 & 0.28 & 0.28 & 0.26 & 0.27 & 0.28 & 0.17 & 0.22 \\
        automobile & -- & 0.26 & 0.28 & 0.28 & 0.26 & 0.27 & 0.28 & 0.21 & 0.00 \\
        bird &  & -- & 0.19 & 0.21 & 0.17 & 0.08 & 0.21 & 0.26 & 0.26 \\
        cat &  &  & -- & 0.21 & 0.13 & 0.21 & 0.21 & 0.28 & 0.28 \\
        deer &  & &  & -- & 0.21 & 0.22 & 0.19 & 0.28 & 0.28 \\
        dog &  &  &  &  & -- & 0.19 & 0.21 & 0.26 & 0.26 \\
        frog &  &  &  &  &  & -- & 0.22 & 0.27 & 0.27 \\
        horse &  &  &  &  &  &  & -- & 0.28 & 0.28\\
        ship &  &  & &  &  &  &  & -- & 0.21 \\
    \end{tabular}
    \caption{This is the distance matrix $d(y, y')$ used for CIFAR-10.}
    \label{tab:my_label}
\end{table}
\newpage
\section{CIFAR-10: Semantic Decision Boundaries}
\begin{figure}[htb]
\centering
\begin{minipage}{0.8\textwidth}
    \centering
    \begin{overpic}[trim={3cm 1cm 3cm 1cm},clip, width=0.46\textwidth]{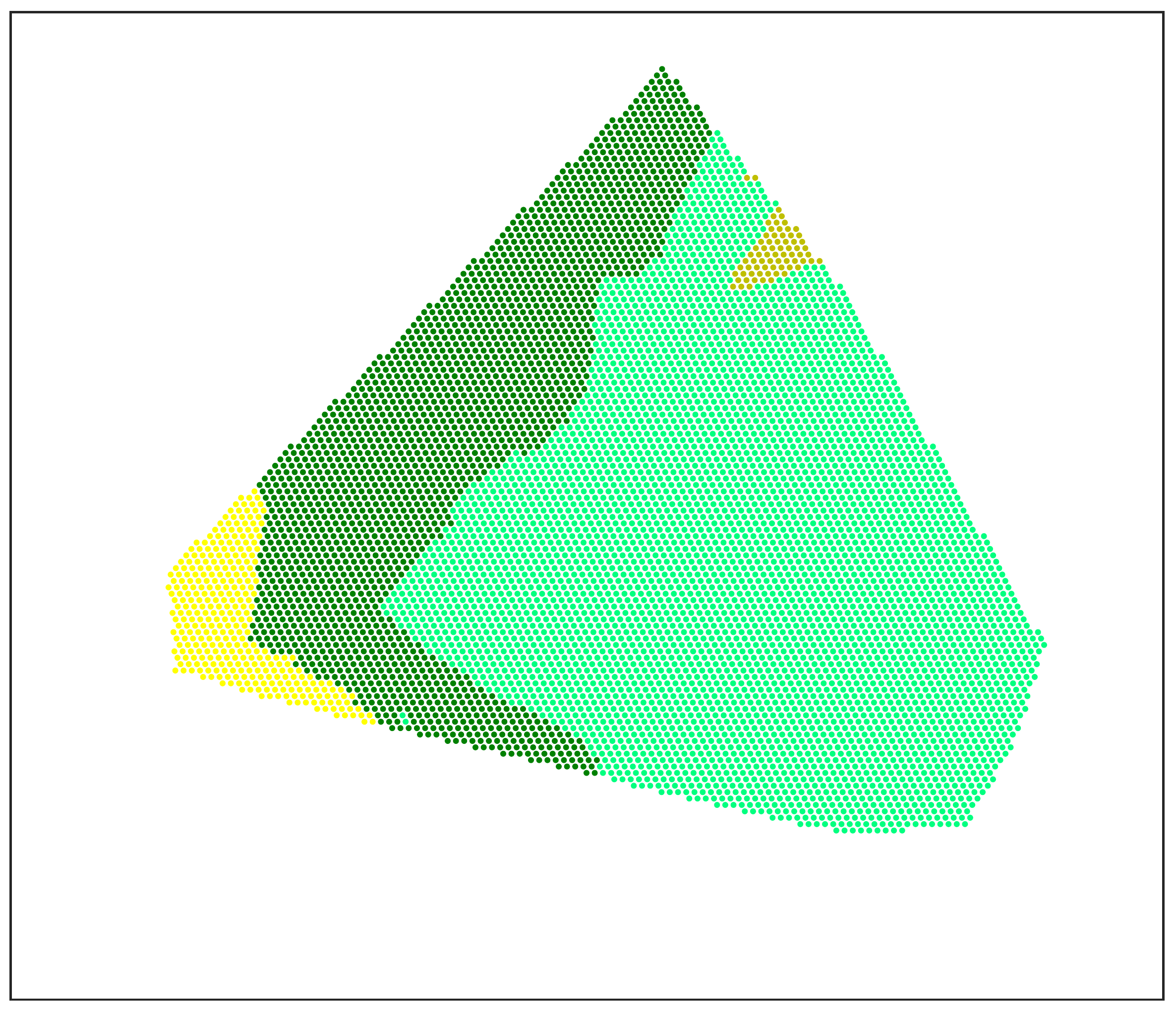}
    \put(3, 33){\color{black}\Large{truck}}
    \put(23, 70){\color{black}\Large{automobile}}
    \put(63, 35){\color{black}\Large{ship}}
    \put(0, 80){\color{black}\large{Model A:}}
    \end{overpic}
    \begin{overpic}[trim={3cm 1cm 3cm 1cm},clip, width=0.46\textwidth]{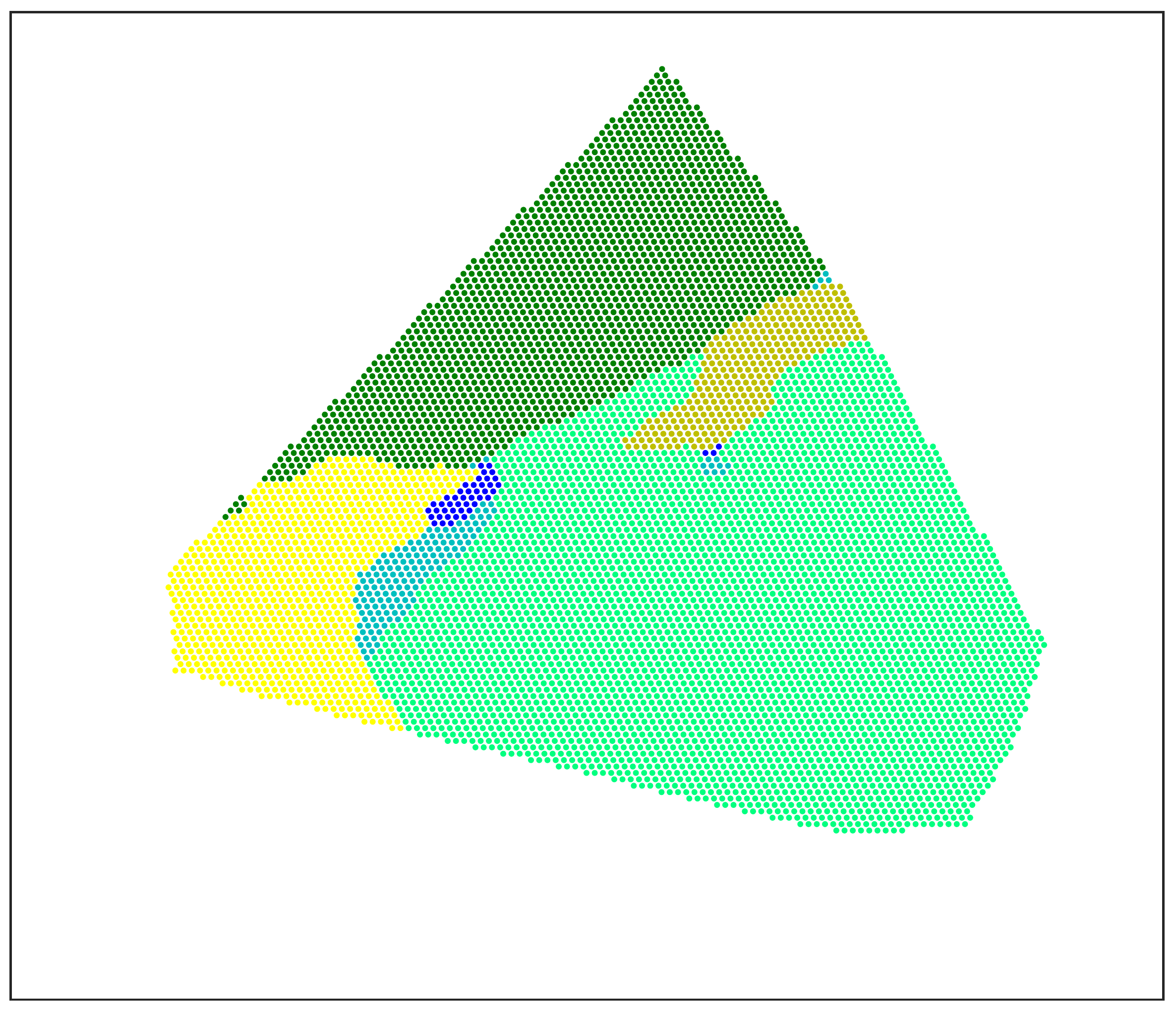}
    \put(3, 33){\color{black}\Large{truck}}
    \put(33, 70){\color{black}\Large{automobile}}
    \put(63, 35){\color{black}\Large{ship}}
    \put(0, 80){\color{black}\large{Model B:}}
    \end{overpic}
    \end{minipage}
    \caption{We plot the projection of the decision boundaries onto a two dimensional surface formed by interpolating between three images belonging to the same semantic category (vehicles) - truck (yellow), ship (green) and automobile (dark green).  \textcolor{cyan}{cyan}/\textcolor{blue}{blue} regions represent \textcolor{cyan}{airplane}/\textcolor{blue}{cat}.}
\end{figure}

\section{Comparison of Tightness}
\begin{figure}
    \centering
    \includegraphics[width=0.5\textwidth]{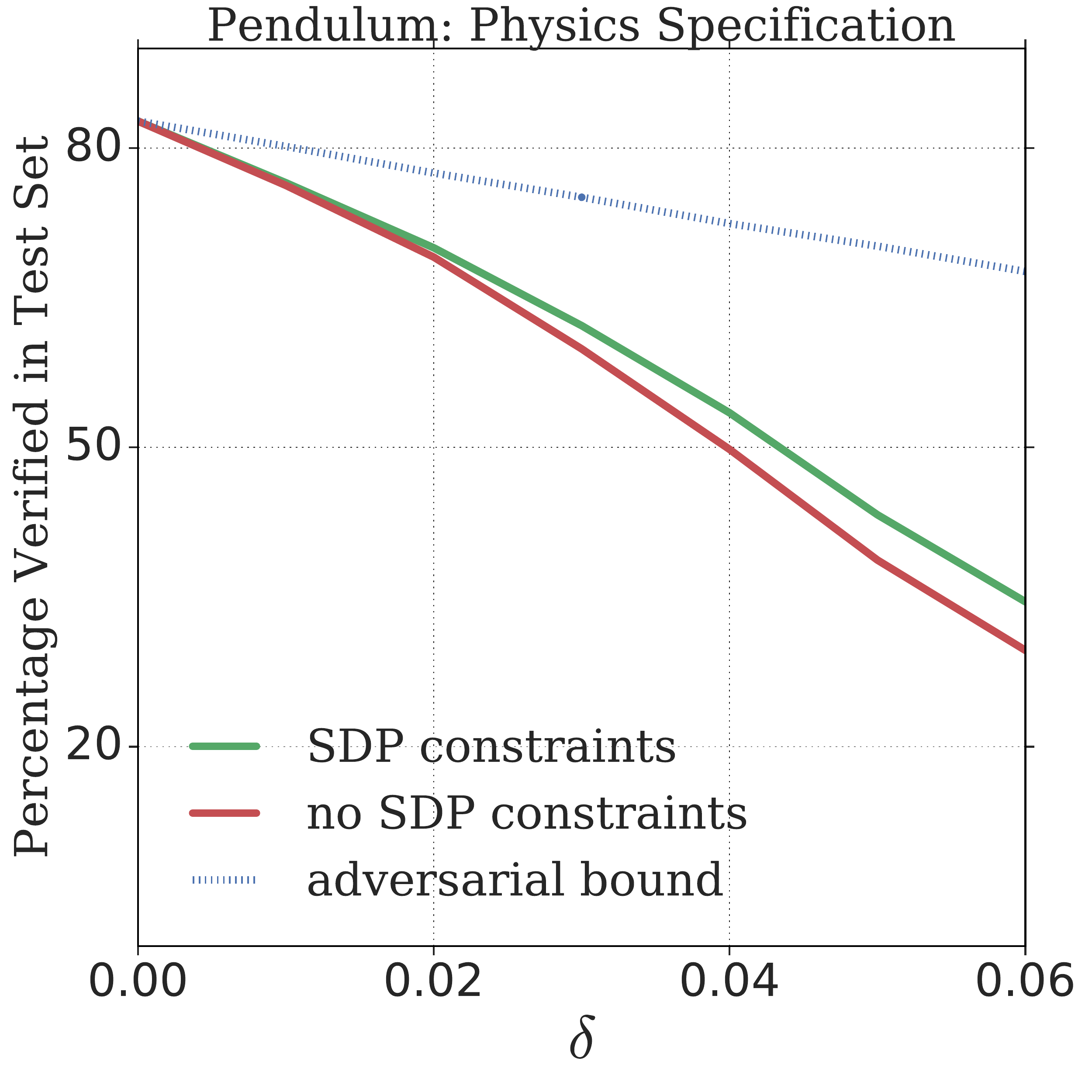}
    \caption{We compared two different convex-relaxations of the physics specification. Specifically, one which is with SDP constraints shown in Eq.~\eqref{Eq:SDP_constraint} and one without the SDP constraint.}
    \label{fig:tightness_comparison}
\end{figure}
For verification purposes, it is essential to find a convex set which is sufficiently tight such that the upper bound found on $F(x,y)$ is a close representative of $\max_{x\in S_{\I}} F(x,f(x))$. We use the difference between the verification bound and adversarial bound as a measure of how tight the verification algorithm is. More concretely, the difference between the adversarial bound and verification bound should decrease as the verification scheme becomes tighter. Here, we compare the tightness of two different convex-relaxations for the physics specification on the pendulum model (Model B). 

The SDP convex relaxation (the one we have shown in the main paper) uses the following relaxation of the quadratic function:
\begin{equation}
    \mathrm{Relax}\br{\mathrm{Quad}}(\alpha, l, u) = \left\lbrace X: \begin{array}{l} 
X - l \alpha^T - \alpha l^T  + ll^T \geq 0 \\
X - u \alpha^T - v \alpha^T + uu^T \geq 0 \\
X - l \alpha^T - \alpha u^T + lu^T \leq 0 \\
\alpha^2 \leq \mathrm{diag}(X) \leq G_{\mathrm{Quad}}(\alpha, l, u) \\
 X - \alpha \alpha^T \succeq 0 \\
\end{array}
    \right\rbrace\:,
\end{equation}
To enforce the constraint $X - \alpha \alpha^T \succeq 0$ means that the complexity of verification increases quadratically with respect to the dimension of $\alpha$. Thus, we can trade complexity for a looser convex-relaxation of the quadratic function given by following:
\begin{equation}
    \widehat{\mathrm{Relax}}\br{\mathrm{Quad}}(\alpha, l, u) = \left\lbrace X: \begin{array}{l} 
X - l \alpha^T - \alpha l^T  + ll^T \geq 0 \\
X - u \alpha^T - v \alpha^T + uu^T \geq 0 \\
X - l \alpha^T - \alpha u^T + lu^T \leq 0 \\
X - u \alpha^T - \alpha l^T + lu^T \leq 0 \\
\alpha^2 \leq \mathrm{diag}(X) \leq G_{\mathrm{Quad}}(\alpha, l, u) \\
\end{array}
    \right\rbrace.
\end{equation}
Comparing both relaxations can give us a gauge of how much enforcing the SDP constraint $X - \alpha \alpha^T \succeq 0$ gives us in practise. The results are shown in Fig.~\ref{fig:tightness_comparison}. What we see is that enforcing the SDP constraints at a small perturbation radius (i.e. $\delta\leq 0.02$), gives us a verification bound which allows us to verify only $0.9\%$ more test points. As the perturbation radius increases SDP constraint allowed us to verify $4.8$\% more test data points than without. In this case, these results suggest that the perturbation radius $\delta$, which is less or equal to 0.06, is sufficiently small such that enforcing higher order constraints makes for only slightly tighter convex-relaxations. In general it is often the case that we can enforce the cheapest (computationally) relaxations if our input set $S_{\mathrm{in}}$ is sufficiently small.

\subsection{Downstream Specification: Tightness as we increase digits to sum}
\begin{figure}
    \centering
    \includegraphics[width=0.4\textwidth]{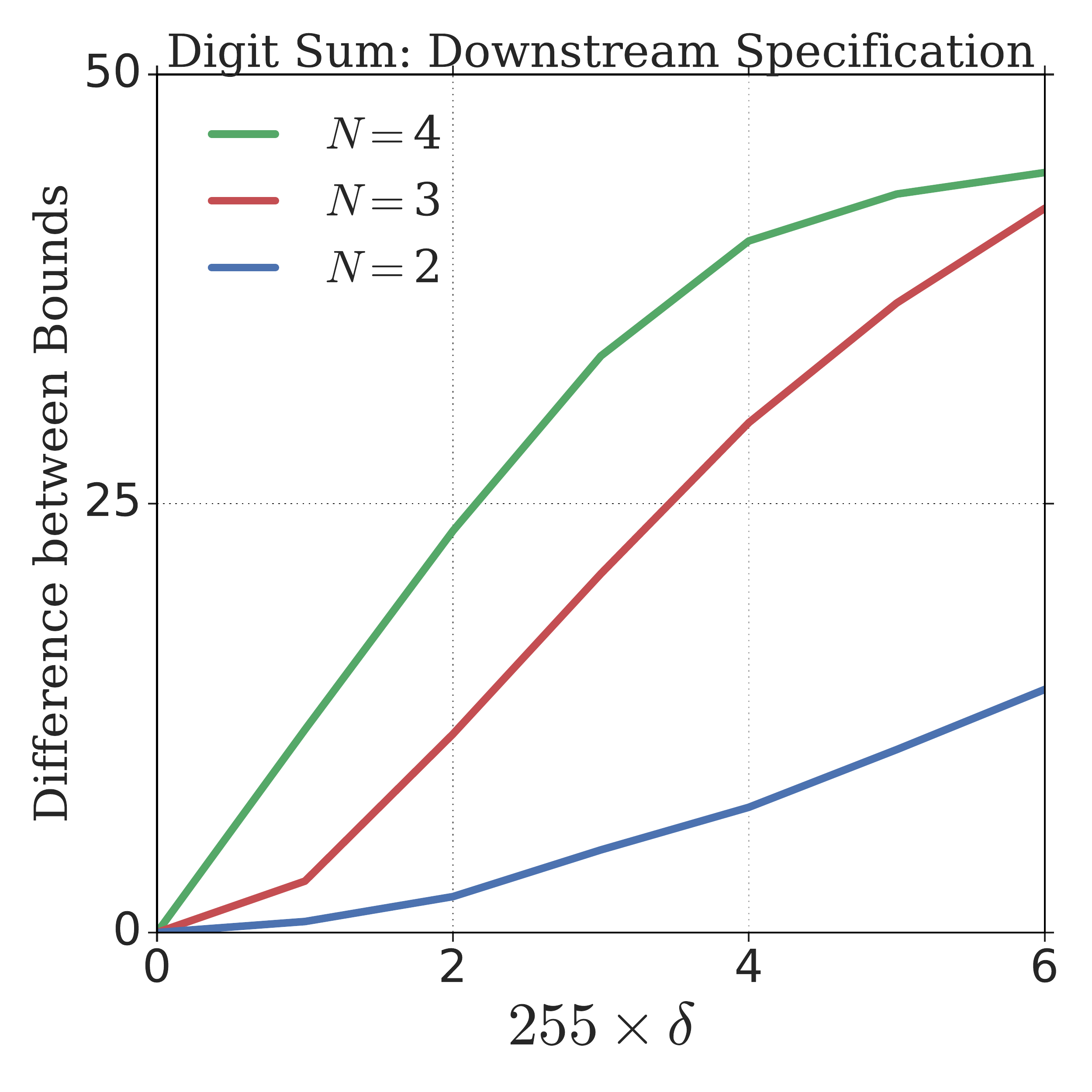}
    \includegraphics[width=0.4\textwidth]{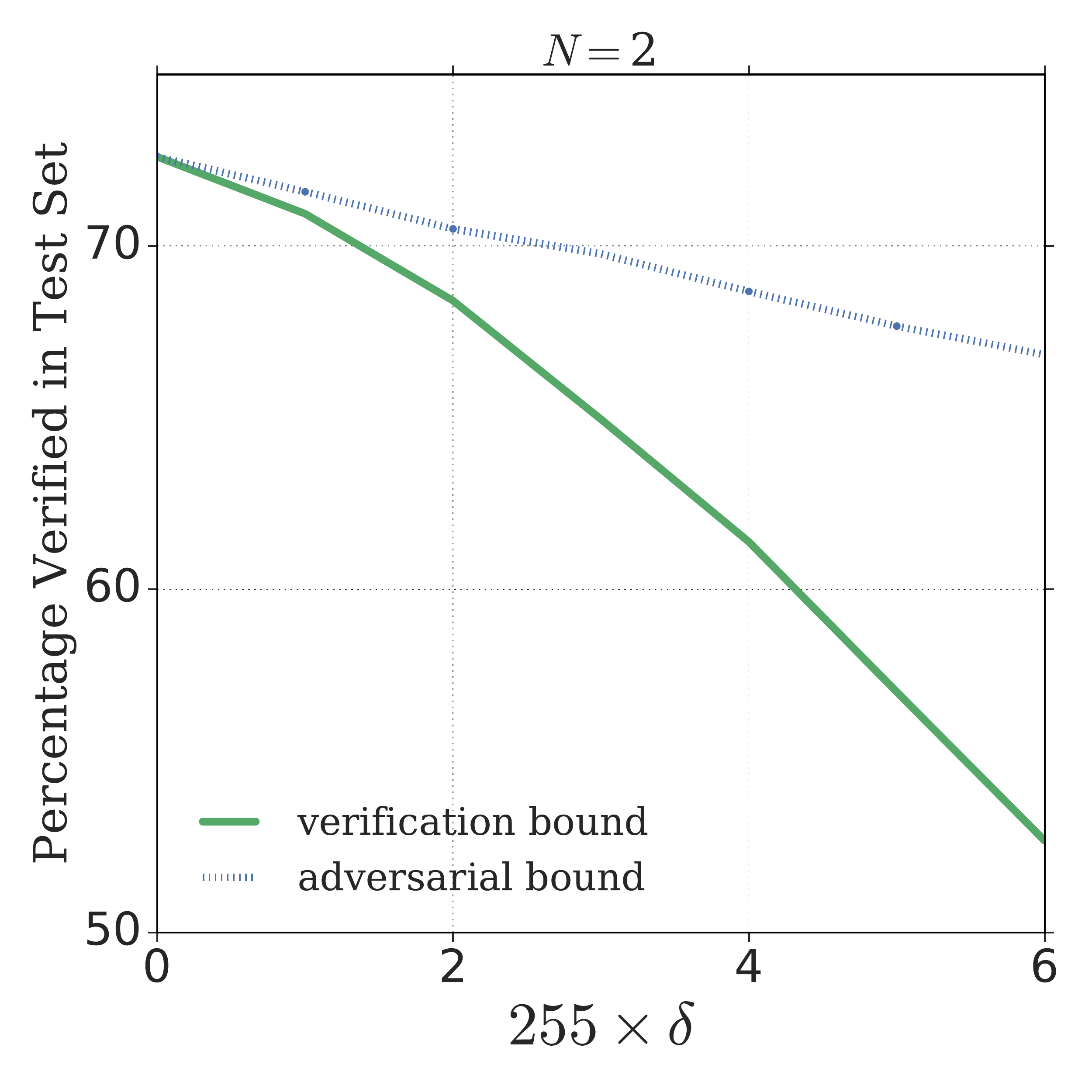}
    \includegraphics[width=0.4\textwidth]{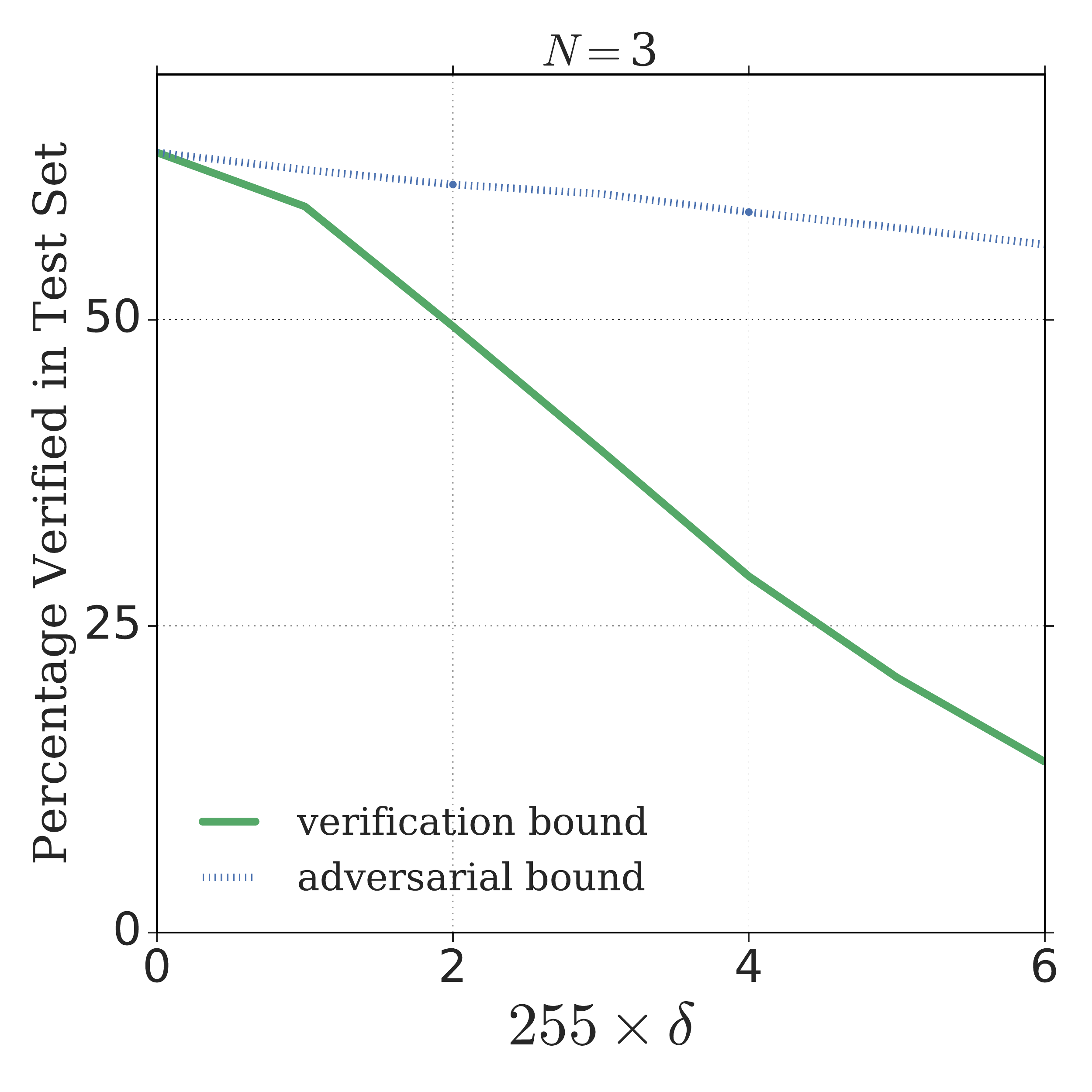}
    \includegraphics[width=0.4\textwidth]{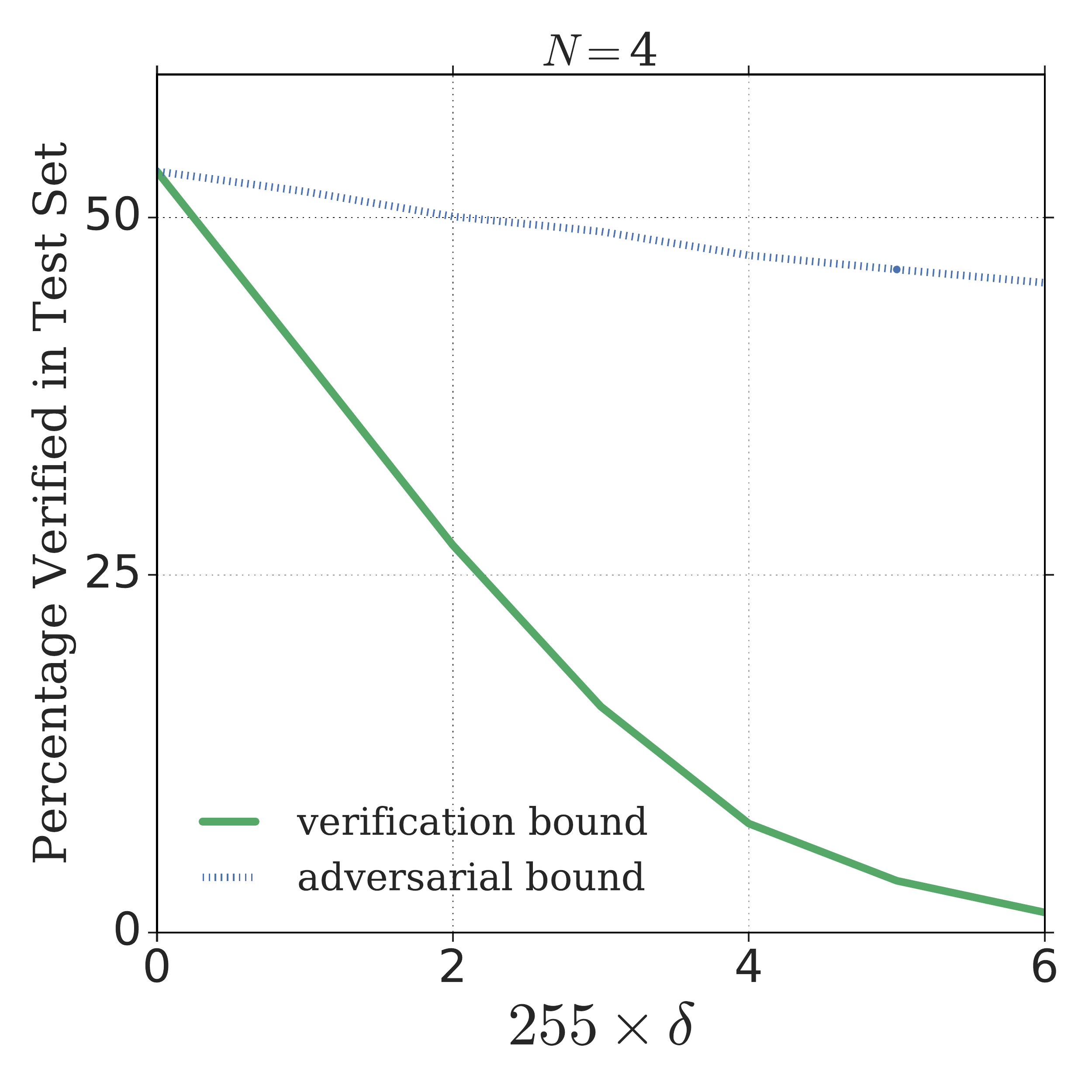}
    \caption{Top left plot shows the difference between verification bound and adversarial bound with respect to $N=$ 2, 3 and 4, where $N$ is the number of digits summed (see. \eqref{eq:digit_sum_prop}). All other plots shows the verification bound and the adversarial bound for $N=2, 3, 4$ digits summed.}
    \label{fig:digit_sum_n}
\end{figure}
For the digit sum problem, we also investigated into the tightness of the verification bound with respect to the adversarial bound as we increase the number of digits, $N$, which we sum over. We used the same network (Model A) for all results. The results are shown in Fig.~\ref{fig:digit_sum_n}. 

The nominal percentage (namely when the input set $S_{\I}$ consists only of the nominal test point) which satisfies the specification decreases from to 72.6\% to 53.3\% as $N$ increases from 2 to 4. At the same time the difference (we denote as $\Delta$) between the verification bound and adversarial bound increases as $N$ increases (shown in top left plot). We note that at small perturbation radius where $\delta = 2/255$, $\Delta= 4.2\%, ~14.1\%, ~26.1\%$ for $N=2, 3, 4$ respectively.  The increase in $\Delta$ is linear with respect to increase in $N$ when $\delta$ is sufficiently small. At a larger perturbation radius, as the percentage of test data points verifiable tends towards zero, the rate of increase in $\Delta$ as $N$ increases slows down. Concretely, at $\delta=6/255$, $\Delta=19.9\%,~49.7\%,~51.8\%$ for $N=2, 3, 4$. Here, we note that the increase in $\Delta$ going from $N=3$ to 4 is only 2.1\%.

\subsection{Tightness with a Toy Example}
\begin{figure}
    \centering
    \includegraphics[width=0.43\textwidth]{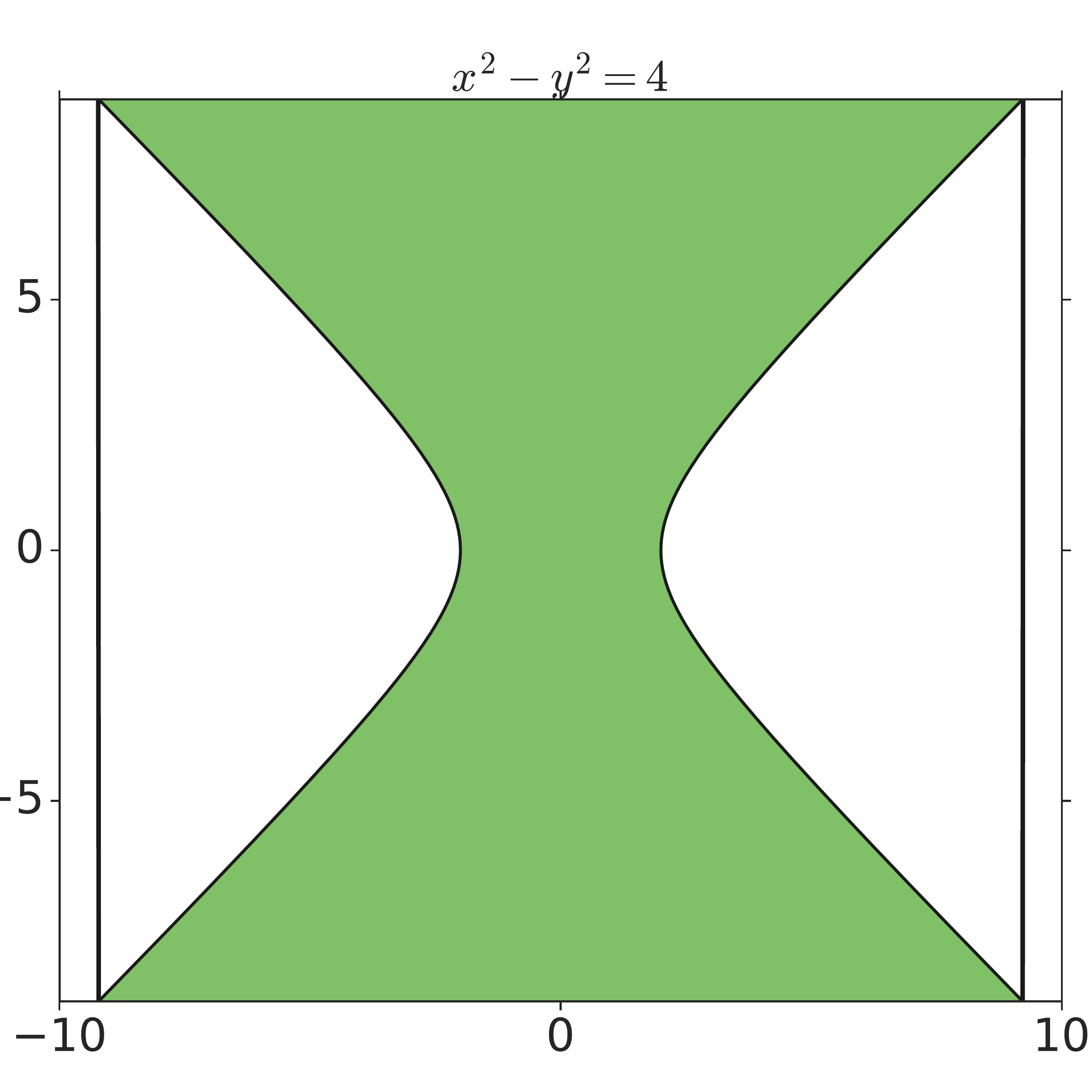}
    \includegraphics[width=0.4\textwidth]{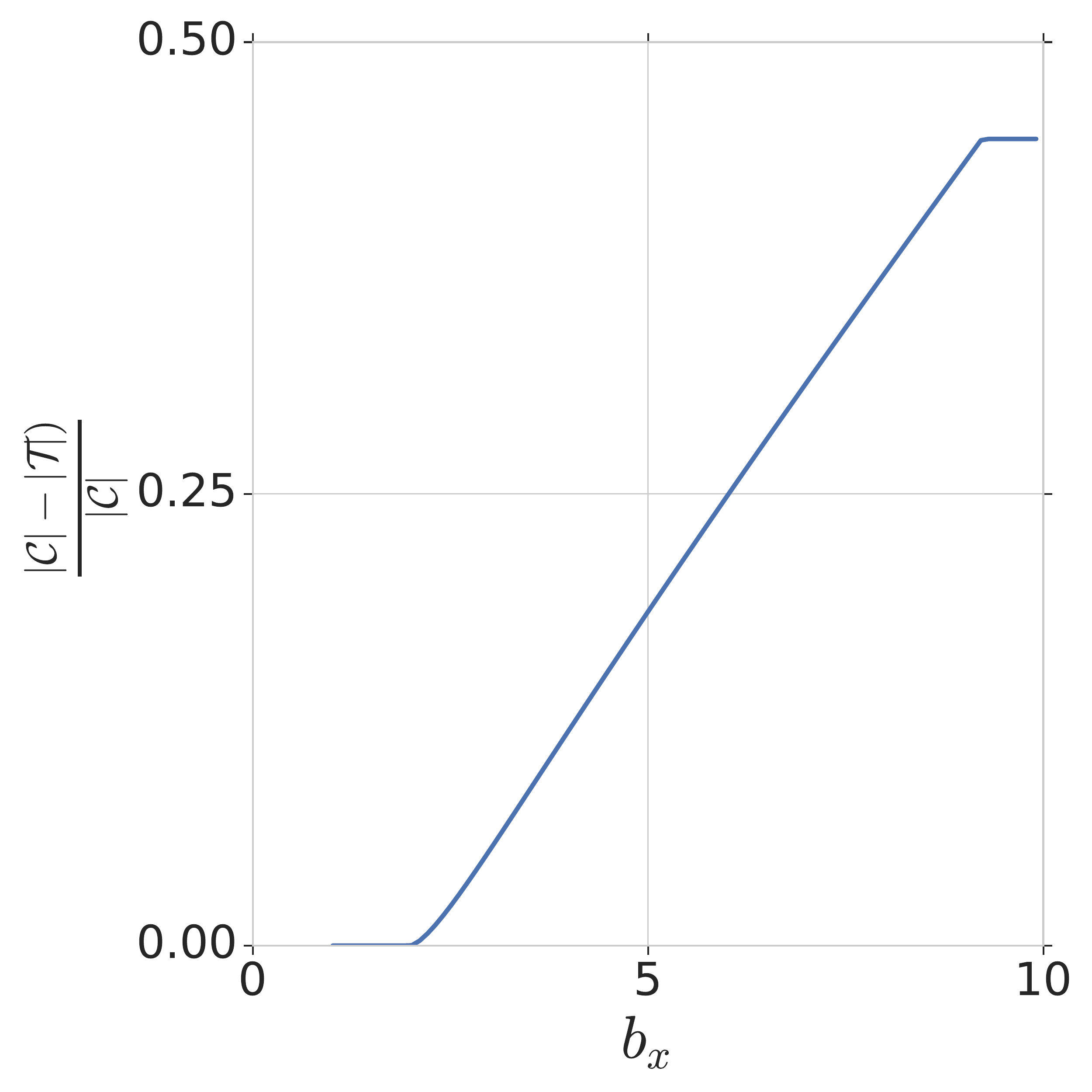}
    \caption{On the left is a plot showing the true feasible region ($\mathcal{T}$) for $x^2 - y^2 \leq 4$ (green) where $b_x=10$ and $b_y=9$, $S_{\I}=\lbrace x: -b_x\leq x\leq b_x\rbrace$ and $S_{\ou}=\lbrace y: -b_y \leq y \leq b_y \rbrace$ with its corresponding convex relaxed region $\mathcal{C}$ (black lined box). The plot on the right shows $\frac{|\mathcal{C}| - |\mathcal{T}|}{|\mathcal{C}|}$, where $|\mathcal{C}|$ represents the volume of the set, as $b_x$ increases and $b_y=9$.}
    \label{fig:toy_example}
\end{figure}

To give intuition regarding the tightness of a convex-relaxed set, we use a toy example to illustrate. Here we consider the true set, $\mathcal{T}(F, S_{\I}, S_{\ou})$ where $S_{\I} = \lbrace x: -b_x\leq x\leq b_x\rbrace$, $S_{\ou}=\lbrace y: -b_y \leq y \leq b_y \rbrace$ and $F(x, y)= x^2-y^2 \leq 4$ (throughout this example we keep $b_y=9$). We can find its corresponding convex-relaxed set using $\mathrm{Relax}(\mathrm{Quad})$ which results in the following:
\[
\mathcal{C}(F, S_{\I}, S_{\ou}) = \left\lbrace (x, y, z): \exists X, Y, \alpha_x, \alpha_y \mbox{ s. t } \begin{array}{l l}
& z= X  - Y\\
& X \in \mathrm{Relax}(\mathrm{Quad})(\alpha_x, -b_x, b_x) \\
 & Y\in \mathrm{Relax}(\mathrm{Quad})(\alpha_y, -b_y, b_y)\\
 & \alpha_x =[1, x], \alpha_y =[1, y] \end{array}\right\rbrace.
\]
The sets are depicted in Fig.~\ref{fig:toy_example}, where we see that $\mathcal{C}$ is simply a box around the true set $\mathcal{T}$ (green). With this toy example we can also ask the question what the difference in volume between the two sets is, as this can give us an idea of the tightness of the convex relaxation. The difference is shown on the right plot of Fig.~\ref{fig:toy_example}. We can see that as we increase $b_x$, while $b_y=9$, the fractional difference in volume increases linearly up to when $b_x$ is the same size as $b_y=9$.

\section{Entropy Specification}
\begin{figure}
    \centering
    \includegraphics[width=0.5\textwidth]{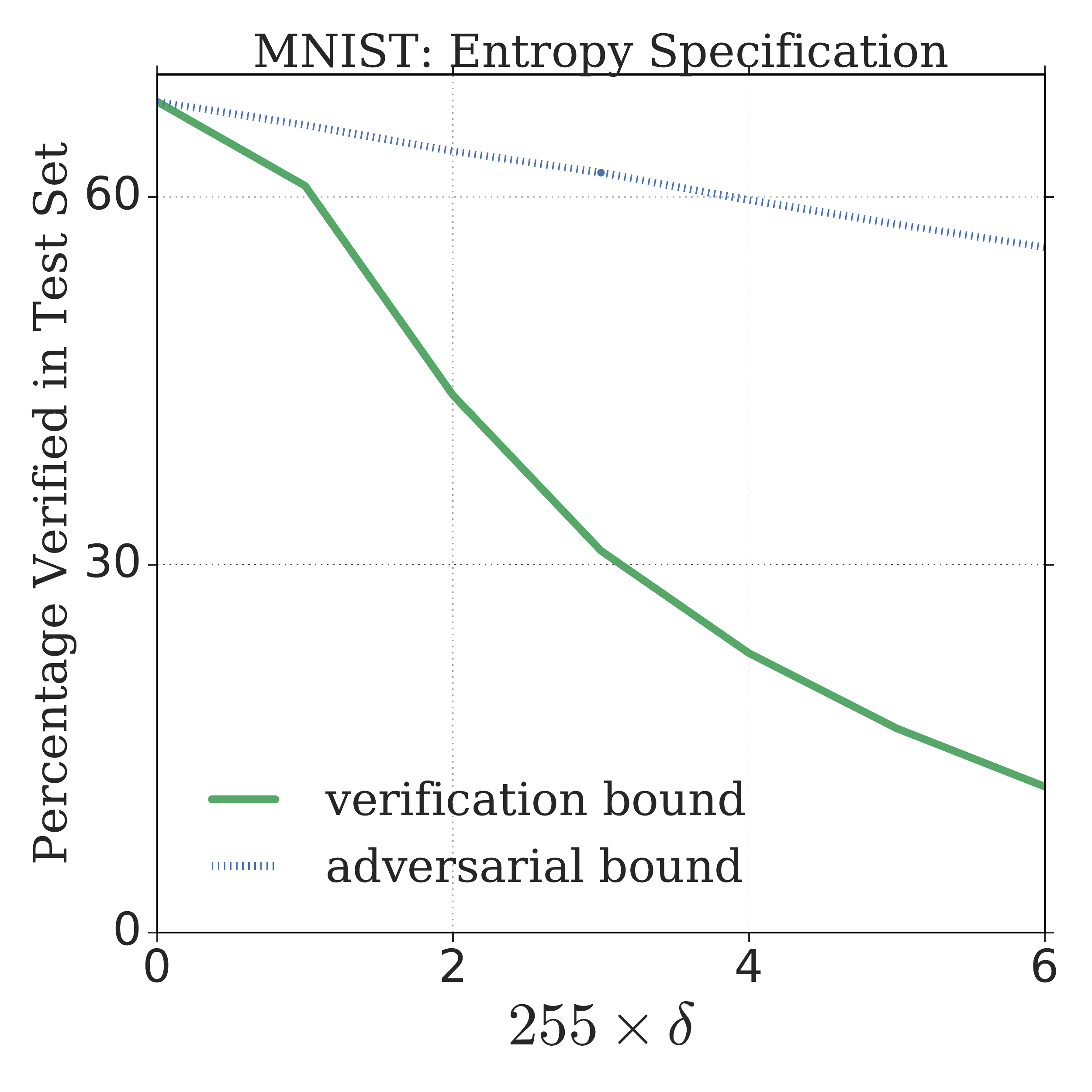}
    \caption{This plot shows the entropy specification, \eqref{eq:entropy_constraint}, satisfaction for an MNIST classifier where $E=0.1$.}
    \label{fig:entropy bound}
\end{figure}
Another specification which we considered was the entropy specification. Lets consider the scenario, where we would never want our network to be 100\% confident about a certain image. That is to say we would like this network to always maintain a level of entropy $E$. Thus for this our specification is the following:
\begin{equation}
F(x, y) = E + \sum_i \frac{\exp (y_i)}{\sum_{j} \exp(y_j)} \log \left(\frac{\exp(y_i)}{\sum_{j} \exp(y_j)}\right) \leq 0
\label{eq:entropy_constraint}
\end{equation}
We note that this specification can be rewritten as the following:
\[
F(x, y) = \left(\sum_{i} E\exp(y_i) + y_i \exp (y_i)\right) - \log \left(\sum_{j} \exp(y_j))\right) \leq 0.
\]
Here the relaxations we use is constructed upon $\mathrm{Relax}(\exp)$ and $\mathrm{Relax}(\mathrm{Quad})$. Concretely, our relaxation is given by:
\[
\mathcal{C} = \left\lbrace 
(x, y, z): \exists~\alpha_1, X, \alpha_2, Q \mbox{ s.t. }
\begin{array}{l}
z = E \sum_i y_i + \mathrm{Trace}(Q_{[:n, n:2n]}) - \log\left(\sum_i \exp(y_i)\right)\\
Q \in \mathrm{Relax}(\mathrm{Quad})(\alpha_2, \tilde{l}_K, \tilde{u}_K) \\
\tilde{l}_K, \tilde{u}_K = [l_K, \exp(l_K)], [u_K, \exp(u_K)] \\
\alpha_2 = [y, X] \\
X \in \mathrm{Relax}(\exp)(\alpha_1, l_K, u_K)\\
\alpha_1 = y
\end{array}
\right\rbrace
\]
here $n$ is the number of labels or equivalently the dimensions of the output $y$. We show the results in Fig.~\ref{fig:entropy bound}. The model we have used is a MNIST classifier with two linear layers and 15880 parameters. At a perturbation radius of $\delta=2/255$, the difference between the verification and adversarial bound is $19.9\%$, as we increase this to 6 pixel perturbations, the difference between the two bounds becomes close to $44\%$.

\end{document}